\documentclass[11pt]{article}

\usepackage[utf8]{inputenc} %
\usepackage[T1]{fontenc}    %

\usepackage{geometry}
\geometry{a4paper, margin=1in}

\usepackage[dvipsnames,table]{xcolor}
\usepackage{color-edits}
\addauthor{yz}{Apricot} %
\addauthor{ab}{red}
\addauthor{sw}{blue}

\usepackage{hyperref}       %
\usepackage{url}            %
\usepackage{booktabs}       %
\usepackage{amsmath,amssymb,amsfonts,amsthm,bm} 
\allowdisplaybreaks
\usepackage[sort&compress,numbers]{natbib}

\newtheorem{lemma}{Lemma}
\newtheorem{proposition}{Proposition}

\newtheorem{remark}{Remark}

\usepackage{float}
\usepackage{algorithm,algorithmic}
\usepackage{nicefrac}       %
\usepackage{microtype}      %
\usepackage{enumitem}       

\definecolor{blue}{HTML}{000080}
\hypersetup{
    colorlinks=true,
    linkcolor=blue,
    filecolor=blue,      
    urlcolor=blue,
    citecolor=blue,
}

\usepackage{multirow}
\usepackage{diagbox}
\usepackage{pifont}
\usepackage{adjustbox}
\usepackage{array}
\newcolumntype{R}[1]{>{\raggedleft\arraybackslash}p{#1}}

\newcommand{\fastclock}{\raisebox{-0.2\baselineskip}{\includegraphics[height=1.1em]{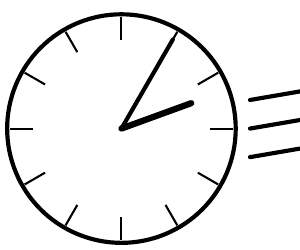}}}
\newcommand{\slowclock}{\raisebox{-0.2\baselineskip}{\includegraphics[height=1.1em]{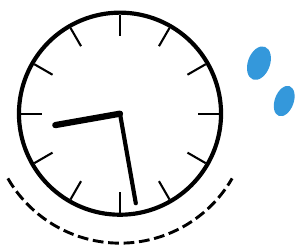}}}

\definecolor{blkA}{RGB}{255,228,225}  %
\definecolor{blkB}{RGB}{218,247,232}  %
\definecolor{blkC}{RGB}{230,244,251}  %
\definecolor{blkD}{RGB}{255,249,214}  %
\definecolor{blkE}{RGB}{246,233,255}  %
\definecolor{blkF}{RGB}{255,182,120}  %
\definecolor{mydeepblue}{HTML}{0000FF}

\newcommand{\gaussmark}{\textsc{GaussMark}}
\newcommand{\marktune}{\textsc{MarkTune}}
\newcommand{\cX}{\mathcal{X}}
\newcommand{\cY}{\mathcal{Y}}
\newcommand{\cV}{\mathcal{V}}
\newcommand{\rr}{\mathbb{R}}
\newcommand{\hnull}{\mathbf{H_0}}
\newcommand{\halt}{\mathbf{H_A}}

\newcolumntype{L}{>{\raggedright\arraybackslash}p{3.2cm}}
\newcommand{\err}[1]{_{\scriptscriptstyle #1}}

\usepackage{graphicx} %
\usepackage{caption}
\usepackage{subcaption}
\usepackage{wrapfig}
\usepackage{tikz}
\usepackage[most]{tcolorbox}
\usepackage{dashrule}
\usepackage{pgfplots}
\pgfplotsset{compat=1.18}
\usetikzlibrary{decorations.pathreplacing}

\usetikzlibrary{arrows.meta, positioning}

\title{MarkTune: Improving the Quality-Detectability \\Trade-off in Open-Weight LLM Watermarking}

\author{%
  Yizhou Zhao\textsuperscript{1}\quad
  \ Zhiwei Steven Wu\textsuperscript{2}\quad
  \ Adam Block\textsuperscript{3}
  \\[0.6ex]
  \textsuperscript{1}University of Pennsylvania\\
  \textsuperscript{2}Carnegie Mellon University\\
  \textsuperscript{3}Columbia University
}

\begin{document}

\maketitle

\begingroup
\renewcommand\thefootnote{}
\footnote{Emails:
  \href{mailto:yzzhao@sas.upenn.edu}{\texttt{yzzhao@sas.upenn.edu}},
  \href{mailto:zstevenwu@cmu.edu}{\texttt{zstevenwu@cmu.edu}}, \texttt{and}
  \href{mailto:abb2190@columbia.edu}{\texttt{adam.block@columbia.edu}}.}
\addtocounter{footnote}{-1}
\endgroup

\begingroup
\renewcommand\thefootnote{}
\footnote{
  Code is available at \href{https://github.com/zhyzmath/MarkTune-LLM-Watermarking}{\texttt{https://github.com/zhyzmath/MarkTune-LLM-Watermarking}}.}
\addtocounter{footnote}{-1}
\endgroup

\begin{abstract}
    Watermarking aims to embed hidden signals in generated text that can be reliably detected when given access to a secret key. Open-weight language models pose acute challenges for such watermarking schemes because the inference-time interventions that dominate contemporary approaches cannot be enforced once model weights are public. Existing watermaking techniques for open-weight models, such as the recently proposed \gaussmark, typically rely on small modifications to model weights, which can yield signals detectable to those equipped with a secret key, but achieving detection power comparable to inference-time watermarks generally requires weight perturbations that noticeably reduce generation quality. We introduce \marktune, a theoretically principled, on-policy fine-tuning framework that treats the \gaussmark\ signal as a reward while simultaneously regularizing against degradation in text quality. We derive \marktune\ as an improvement on \gaussmark\ and demonstrate that \marktune\ consistently improves the quality-detectability trade-off over \gaussmark\ by steering finer-grained, watermark-aware weight updates within the model’s representation space while preserving generation quality. Empirically, we show that \marktune\ pushes the quality-detectability frontier of \gaussmark\ close to that of inference-time watermarking, remains robust to paraphrasing and fine-tuning attacks, and exhibits strong generalization: a model fine-tuned on one dataset retains substantial watermark detection power on unseen datasets. Together, these results establish \marktune\ as a general strategy for embedding robust, high-quality watermarks into open-weight LMs.
\end{abstract}

\section{Introduction} \label{sec:intro}

Open-weight Language Models (LMs) are growing in prevalence due to their rapidly improving capabilities~\cite{2024llama,yang2025qwen3,guo2025deepseek}.  As open-weight models continue to be deployed, they raise significant concerns about potential misuse on top of the pre-existing societal impacts introduced by closed-weight models. As such, it is critical to develop techniques to ensure appropriate usage that are effective on open-weight models and are sufficiently practical so as to be widely adopted.  In this work, we focus on the specific task of \emph{watermarking} LM output, i.e., introducing an almost imperceptible signal into generated text that, when given access to a secret key, can be reliably detected in a statistically valid manner.  Watermarking is critical to establish trust that a given piece of text is or is not generated by an LM, which is a necessary prerequisite in a number of societal applications, including academic integrity~\cite{2022ai,2023chatgpt,2023large}, misinformation mitigation~\cite{2019defending,2020deep,2021ethical}, and intellectual property protection~\cite{2024generative,2024intellectual,xu2025copyright}.

\begin{figure}[!t]
  \centering
  \begin{subfigure}{0.48\textwidth}
    \centering
    \includegraphics[width=\linewidth]{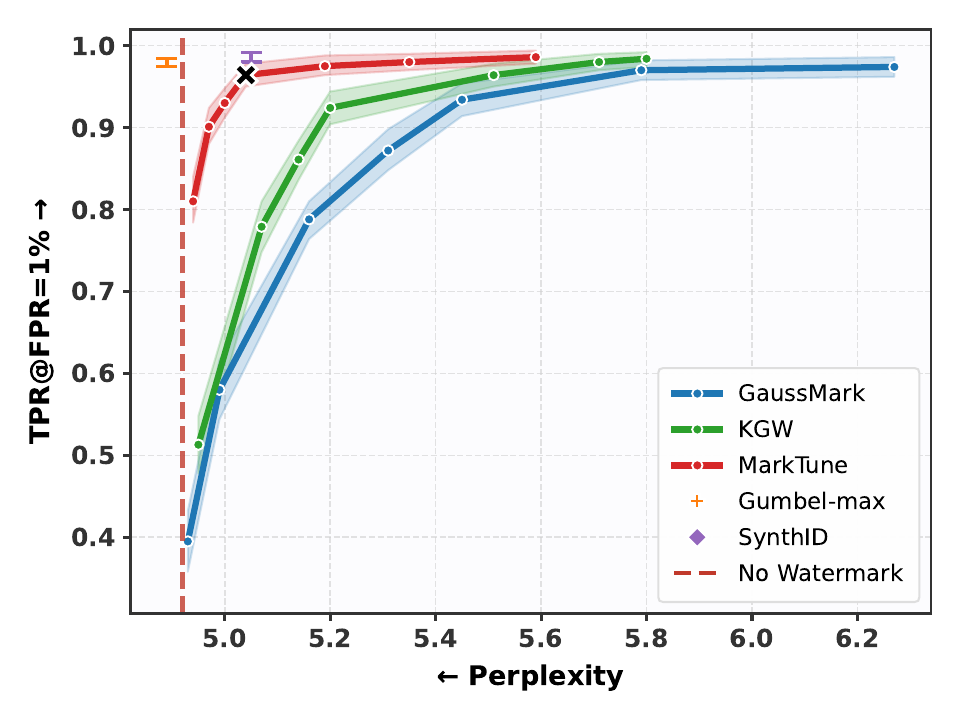}
    \caption{Qwen3-4B, T=0.7, length=200 tokens.}
    \label{fig:TPR_len}
  \end{subfigure}
  \hfill
  \begin{subfigure}{0.48\textwidth}
    \centering
    \includegraphics[width=\linewidth]{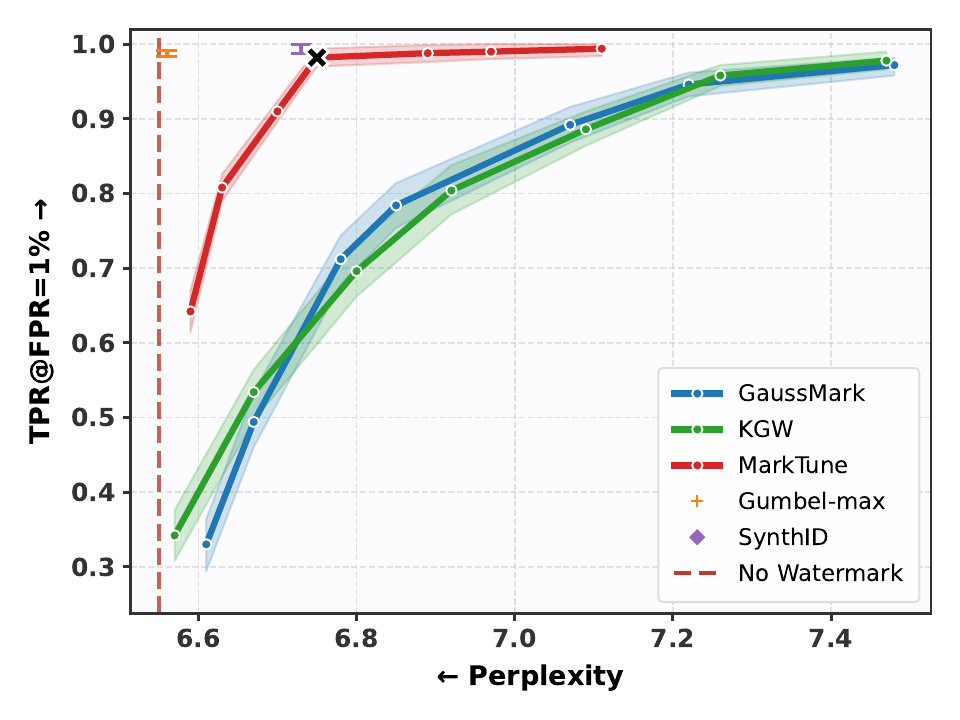}
    \caption{Llama2-7B, T=1.0, length=200 tokens.}
    \label{fig:ROC_curve}
  \end{subfigure}
  \caption{
  Trade-off between detectability (TPR@1\% FPR) and text quality (Perplexity) across various watermarking schemes. Inference-time watermarking methods (KGW~\cite{2023KGW}, Gumbel-max~\cite{2023Aaronson}, SynthID~\cite{2024scalable}) modify only the sampling process and are shown here for reference, as they are not applicable in open-weight settings. Model-embedded watermarking methods (\gaussmark~\cite{2025gaussmark} and our \marktune) embed the watermark directly into the model weights. \marktune\ substantially improves the trade-off over \gaussmark\ and achieves performance comparable to inference-time watermarking methods. The black ``×'' marks the \marktune\ configuration used in Section~\ref{sec:exper}.}
  \label{fig:trade-off}
\end{figure}

Previous work has posed watermarking as a statistical hypothesis testing problem~\cite{2025gaussmark,2023towards,2025statistical}, where a joint distribution is assumed over the text and some watermarking key: in the null hypothesis, the key and text are independent (meaning the text is unwatermarked), while in the alternative hypothesis, the key and text have some statistically detectable relation (meaning the text is watermarked).  The goal of a watermarking scheme, then, is to design a mechanism for generating text given a key such that the null and alternative hypotheses can be reliably distinguished, subject to quality constraints on the generated text itself.  These quality constraints are often formalized as strict, information-theoretic notions of non-distortion~\cite{2023Robust,hu2023unbiased,christ2024undetectable} (e.g., the marginal distributions of watermarked and unwatermarked text should be close in total variation distance).  In order to satisfy these stringent guarantees while maintaining high detectability, many current approaches to watermarking LMs involve interventions at \emph{inference time}~\cite{2023Robust,hu2023unbiased,2023KGW,2024scalable,2023dipmark,2024debiasing}, by subtly changing the sampling itself to introduce a watermark signal.  While this approach can be effective when the model is accessed only through a generation API, in the case of open-weight models, the provider has no control over a user's generation pipeline and, as such, cannot guarantee that such a watermark will be present in generated text. This problem motivates the need for watermarking techniques specifically designed for open-weight models, where the watermark is embedded directly into the model weights themselves and thus does not require a user to apply a specific decoding approach.  Several distortionary watermarking schemes have been proposed that maintain high text quality~\cite{2025gaussmark,xu2024learning,2024gu,2024provably}, suggesting that information-theoretic notions of distortion are conservative measures of text quality.

One recently introduced watermarking scheme that intervenes at the level of weights instead of during inference is \gaussmark\ \citep{2025gaussmark}, which adds a small amount of Gaussian noise to a subset of the weight matrices, subtly shifting the distribution of generated text in a manner detectable when given access to the added Gaussian noise.  In \cite{2025gaussmark}, the authors demonstrated that if the variance of the added noise is sufficiently small, and the parameters are carefully chosen, then the text distribution can be modified so as to achieve nontrivial detectability with no loss of text quality.  Moreover, ~\cite{2025towards} demonstrated that \gaussmark\ is at least somewhat robust to a number of simple fine-tuning attacks that a user may apply in an attempt to remove the watermark from the weights of the model. Taken together, these results suggest that \gaussmark\ is a promising approach, but it remains unclear how close \gaussmark\ lies to the optimal quality-detectability frontier, and whether this trade-off can be further improved through more fine-grained and adaptive modifications to the model weights. We thus ask---\emph{Can we design a watermarking scheme for open-weight LMs that preserves text quality and downstream task performance, achieves detection power comparable to inference-time watermarking, and remains robust to realistic text-editing and fine-tuning attacks?}

\begin{figure}[!t]
\centering
\includegraphics[width=\columnwidth]{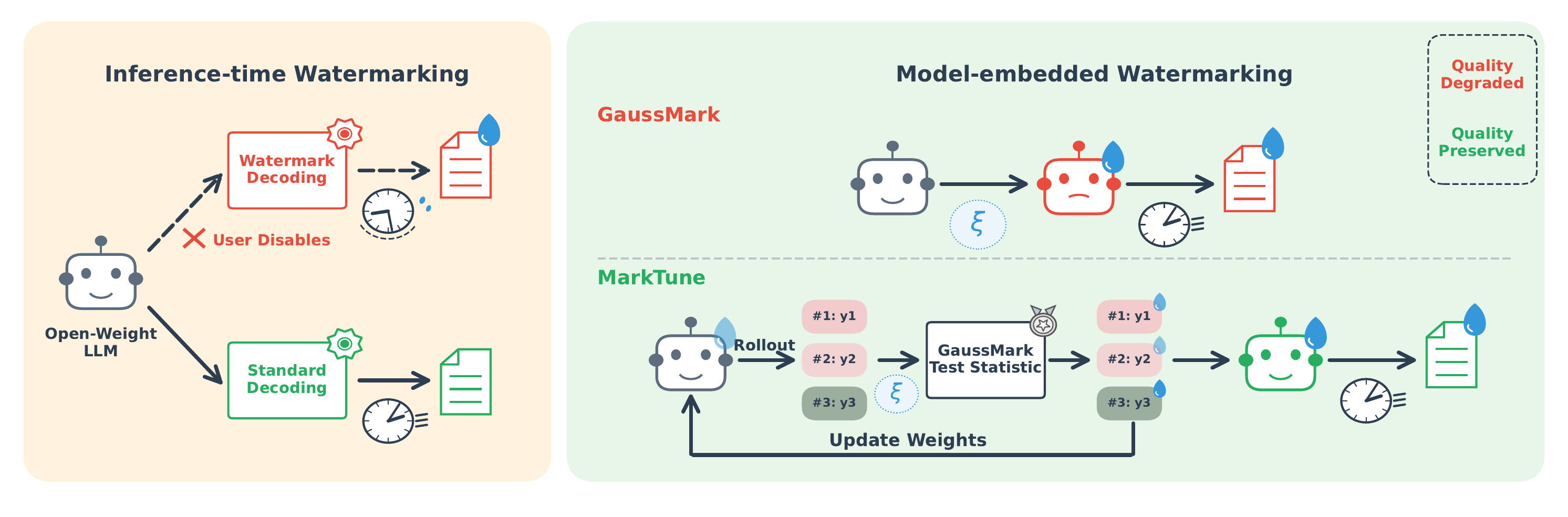}

\caption{Overview of our framework compared to prior work~\cite{2023KGW,2023Aaronson,2024scalable,2025gaussmark}. \textbf{Left}: Inference-time watermarking schemes break down on open-weight LLMs because users can disable the decoding algorithm, and these methods often introduce substantial generation latency (\fastclock\ indicates no latency; \slowclock\ indicates extra latency). \textbf{Right}: Our approach, \marktune, treats the \gaussmark\ test statistic as a reward and performs on-policy fine-tuning to embed a highly detectable yet quality-preserving watermark signal into the model’s weights.}

\label{fig:method_ill}
\end{figure}

\paragraph{Our contribution.}  In this work, we answer the above question in the affirmative by proposing \marktune, a theoretically principled on-policy fine-tuning framework that improves the trade-off between quality and detectability of \gaussmark\ to match that of inference-time watermarking schemes. The core idea is simple: treat the \gaussmark\ test statistic as a reward during fine-tuning while regularizing the model to preserve text quality. This procedure enables the \gaussmark\ signal to be embedded in an adaptive and robust manner within the model’s overparameterized representation space. Our framework also preserves the statistical validity of detection, ensuring that the resulting test retains the same false-positive guarantees as the underlying scheme. Through extensive experiments and ablations, we show that \marktune\ consistently improves the quality-detectability trade-off for \gaussmark\ across datasets and tasks, achieves detection performance close to inference-time watermarking while maintaining generation quality and downstream task performance, and remains robust to realistic text-editing and fine-tuning attacks.

\paragraph{Related Work} 
LM text watermarking schemes can be broadly categorized into two families: \emph{inference-time} watermarking and \emph{model-embedded} watermarking. Distortionary inference-time schemes modify the sampling process—for example, by biasing next-token sampling toward a partitioned ``green list''~\cite{2023KGW,2024ProvableRobust}. Although these methods provide statistical guarantees, they introduce noticeable distortion in generated text and are vulnerable to realistic paraphrasing attacks~\cite{2024revisiting,2024watermarkstealing}. In contrast, nondistortionary inference-time schemes embed watermark signals by influencing the pseudorandom number generator used in next-token sampling while preserving the original distribution. For instance, ~\cite{2023Aaronson} and~\cite{2023Robust} draw independent pseudorandom variables and generate tokens using deterministic decoders based on the Gumbel-max trick and inverse transform sampling. Similarly, ~\cite{2024debiasing} and~\cite{2023dipmark} propose unbiased variants of the KGW watermark~\cite{2023KGW} by introducing decoding algorithms based on maximal coupling and reweighting strategies, respectively. However, these approaches are not yet suitable for large-scale LM deployment due to their generation latency~\cite{2025gaussmark} and the fact that the joint distribution is often not preserved under key collisions~\cite{wu2024distortion}. More recently, ~\cite{2024scalable} introduced a tournament-based watermarking, which achieves high detection power with minimal latency overhead. However, maintaining text quality in this setting requires storage that scales linearly with the number of generated tokens, making it impractical for large production systems.

\emph{Model-embedded} watermarking can be divided into two categories: training-based schemes~\cite{xu2024learning,2024gu} and weight-editing schemes~\cite{2025gaussmark,2024provably}. These approaches embed the watermark signal directly into model weights, making them naturally suitable for open-weight LMs while incurring neither generation latency nor additional storage overhead. However, training-based schemes remain limited in learnability and generalization across tasks~\cite{2024gu} and lack rigorous guarantees on the statistical validity of detection~\cite{xu2024learning}. Weight-editing schemes, in contrast, either require modifications to standard model architectures~\cite{2024provably} or suffer from computationally intensive hyperparameter searches and limited advancement in balancing text quality with detection performance~\cite{2025gaussmark}.

\section{Preliminaries} \label{sec:preliminary}

A language model is any conditional distribution mapping a \emph{prompt} $x \in \cX$ (the space of prompts) to a distribution over responses $y \in \cY$ (the space of responses), i.e. a function $p: \cX \to \Delta(\cY)$.  As is common in language modeling, we will generally consider \emph{autoregressive generation}, where there is some vocabulary set $\cV$ and both $\cX$ and $\cY$ are subsets of $\cV^\ast$.  In this case, the model generates a response one token at a time by sampling $y_1 \sim p(\cdot | x)$, then $y_t \sim p(\cdot | x,y_1, \dots, y_{t-1})$ and concatenating the output tokens to form a response.  As we are chiefly concerned with transformer instantiations of language models, we generally parameterize the model by some set of weights $\Theta \subset \rr^d$ and write $p_\theta$ for the resulting model.  Typically, in the case of transformers, $\theta \in \Theta$ can be thought of as the concatenation of a large number of high-dimensional matrices, one for each layer of the transformer.   \looseness-1

\paragraph{Hypothesis testing.}  As in \cite{2025gaussmark, 2023towards,2025statistical}, we formalize the notion of watermarking as a statistical hypothesis testing problem. Recall that a hypothesis testing problem consists of an observation space $\Xi \times \cY$ and two disjoint collections of distributions on the observation space, $\hnull$ and $\halt$.  A test is a (possibly randomized) function $\phi: \Xi \times \cY \to \{0,1\}$, where $\phi(\xi,y) = 1$ indicates that the observation $(\xi, y)$ provides sufficient evidence to suggest that it was not sampled from any distribution in $\hnull$.  The test is said to have level $\alpha$ if the false positive rate, the probability that $\phi = 1$ even when $(\xi,y)$ is sampled from an element of the null hypothesis, is at most $\alpha$.  The power of the test, $1 - \beta$, is the probability that $\phi = 1$ when $(\xi,y)$ is truly sampled from an element of the alternative hypothesis. Clearly we wish to have a test with both $\alpha$ and $\beta$ as small as possible.\looseness-1

\paragraph{Weight-editing watermarking.} 
There are three phases in a weight-editing watermarking scheme: embedding, generation and detection. Formally, let $\Xi$ denotes the key space with an associated distribution $\rho$. Given a language model $p_\theta$, we first sample a key $\xi \sim \rho$ and obtain a watermarked model $p_{\theta(\xi)}$ by applying a weight-editing algorithm $\mathcal{A} : \Theta \times \Xi \rightarrow \Theta$, so that $\theta(\xi) = \mathcal{A}(\theta, \xi)$. The watermarked text is then generated by sampling $y \sim p_{\theta(\xi)}(\cdot \mid x)$ using a prompt $x$. Detection is formulated as a hypothesis test, where $\hnull = \left\{ \rho \otimes q| q \in \Delta(\cY) \right\}$ denotes the set of joint distributions under which the key $\xi$ and the text $y$ are independent, and $\halt$ corresponds to the joint distribution induced by the watermarking generation process.\footnote{As stated, the detector has access to the prompt $x$ used to generate $y$.  In practice, this is of course not the case and \textbf{our empirical results do not rely on this access}.} 

The fundamental statistical difficulty of distinguishing $\halt$ from $\hnull$ is governed by the total variation (TV) distance between the induced distributions under the two hypotheses~\cite{2023towards}, whose square is upper bounded by the well-known KL divergence up to a constant by Pinsker's inequality~\cite{2022short}.

\paragraph{\gaussmark.} In this work we focus on improving \gaussmark~\citep{2025gaussmark}, a recently proposed weight-editing watermarking scheme  Given a language model $p_\theta: \mathcal{X} \to \Delta(\mathcal{Y})$, \textsc{GaussMark} partitions the weights as $\theta = (\theta_{\mathrm{wm}}, \theta_0)$, where $\theta_{\mathrm{wm}}$ (with dimension $d_r$) is the subset of model weights modified to embed the watermark, and $\theta_0$ the remaining weights. The base model $p_\theta$ is stored as a fixed reference model for later detection. To embed a watermark, \textsc{GaussMark} samples the key $\xi_\sigma \sim \mathcal{N}(0, \sigma^2 \mathbb{I}_{d_r})$ and obtains watermarked model $p_{\theta(\xi_\sigma)}$ with $\theta(\xi_\sigma) = (\theta_{\mathrm{wm}} + \xi_\sigma, \theta_0)$, i.e., it perturbs the selected weights with a small amount of Gaussian noise and leaves the others unchanged.  To detect the watermark, \gaussmark\ uses the following test statistic:
\begin{equation}
\label{eq:test_sta}
    \psi(y,\xi_\sigma \mid x) = \frac{\langle \xi_\sigma, \nabla_{\theta_{\mathrm{wm}}}\log p_\theta(y \mid x)\rangle}{\sigma \|\nabla_{\theta_{\mathrm{wm}}}\log p_\theta(y \mid x)\|_2}.
\end{equation}
Intuitively, this statistic measures the alignment between the secret key $\xi_\sigma$ and the gradient of the reference model with respect to the watermarked weights. Under $\hnull$, $\xi_\sigma$ is independent of the text $y$, so $\psi(y,\xi_\sigma \mid x)$ follows a standard normal distribution and a test of level $\alpha$ can be constructed by thresholding the statistic at the inverse Gaussian CDF (denoted by $\Phi^{-1}$) at $1-\alpha$.

\section{\marktune: On-Policy Fine-Tuning with Watermark Signal Rewards} \label{sec:method}

In this section, we introduce our proposed watermarking scheme, \marktune, a theoretically principled approach that enhances the quality-detectability trade-off of \gaussmark\ through on-policy fine-tuning with watermark signal rewards.  To motivate \marktune, we first present the following result on the distortion introduced by \gaussmark.

\begin{proposition}\label{prop:tv_bound}
  Let $p_\theta$ be a language model with parameters $\theta = (\theta_{\mathrm{wm}}, \theta_0) \in \Theta$, where $\theta_{\mathrm{wm}} \in \mathbb{R}^{d_r}$ is the subset of parameters to which the \gaussmark\ is applied. Assume that the map $\theta' \mapsto \mathrm{D_{KL}}\!\left(p_{\theta'}(\cdot \mid x)\,\big\|\, p_{\theta}(\cdot \mid x)\right)$ is twice continuously differentiable with uniformly bounded Hessian.\footnote{This condition can be relaxed to local continuity and bounding of the Hessian at the cost of conditioning on the high probability event that $\theta(\xi)$ remains in this neighborhood.}  If $\xi_\sigma \sim \mathcal{N}(0, \sigma^2 \mathbb{I}_{d_r})$ is Gaussian and $\theta(\xi_\sigma) = (\theta_{\mathrm{wm}} + \xi_\sigma, \theta_0)$, then for any prompt $x$ there is a constant $C$ depending on the Hessian bound such that
    \begin{equation}
        \label{eq:kl_bound}
        \mathbb{E}_{\xi_\sigma}\!\left[
            \mathrm{D_{KL}}\!\left(
                p_{\theta(\xi_\sigma)}(\cdot \mid x)
                \,\Big\|\,
                p_{\theta}(\cdot \mid x)
            \right)
        \right]\le
        C \, \sigma^2 d_r.
    \end{equation}
\end{proposition}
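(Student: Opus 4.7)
The plan is to prove Proposition~\ref{prop:tv_bound} by a second-order Taylor expansion of the KL divergence as a function of the perturbation, exploiting that the zeroth- and first-order terms both vanish, and then taking expectation over the Gaussian key.

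Concretely, I would define the one-sided function
\begin{equation*}
  f(\eta) \;=\; \mathrm{D_{KL}}\!\left(p_{(\thetawm+\eta,\,\theta_0)}(\cdot \mid x)\,\big\|\, p_\theta(\cdot \mid x)\right), \qquad \eta \in \rr^{d_r},
\end{equation*}
so that the quantity to bound is $\mathbb{E}_{\xi_\sigma}[f(\xi_\sigma)]$. The differentiability hypothesis gives $f \in C^2(\rr^{d_r})$ with $\|\nabla^2 f(\eta)\|_{\mathrm{op}} \le M$ for a constant $M$ coming from the assumed uniform Hessian bound. The two structural facts I would then record are: (i) $f(0) = 0$ because $\mathrm{D_{KL}}(p \| p)=0$, and (ii) $\nabla f(0) = 0$. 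For (ii), the cleanest justification is that $f \ge 0$ everywhere and attains its minimum at $\eta = 0$, so the gradient must vanish there; alternatively one can verify it directly via the log-derivative identity, which produces an expected score under $p_\theta$ that is zero.

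With those two facts in hand, Taylor's theorem with the Lagrange remainder yields, for each realization of $\xi_\sigma$,
\begin{equation*}
  f(\xi_\sigma) \;=\; f(0) + \langle \nabla f(0),\, \xi_\sigma\rangle + \tfrac{1}{2}\,\xi_\sigma^\top \nabla^2 f(\tilde\eta)\,\xi_\sigma \;=\; \tfrac{1}{2}\,\xi_\sigma^\top \nabla^2 f(\tilde\eta)\,\xi_\sigma
\end{equation*}
for some $\tilde\eta$ on the segment between $0$ and $\xi_\sigma$, and hence $f(\xi_\sigma) \le \tfrac{M}{2}\|\xi_\sigma\|^2$. Taking expectation and using the standard identity $\mathbb{E}_{\xi_\sigma}\|\xi_\sigma\|^2 = \sigma^2 d_r$ for $\xi_\sigma \sim \mathcal{N}(0,\sigma^2 \mathbb{I}_{d_r})$ gives $\mathbb{E}_{\xi_\sigma}[f(\xi_\sigma)] \le \tfrac{M}{2}\sigma^2 d_r$, which is the claim with $C = M/2$.

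There is no deep obstacle in this argument; the whole proof reduces to identifying which order of the Taylor expansion survives. The one place that deserves care is justifying $\nabla f(0) = 0$ rigorously, since interchanging $\nabla_{\theta'}$ with $\mathbb{E}_{y \sim p_{\theta'}}[\,\cdot\,]$ requires a mild dominated-convergence hypothesis; however, this is already implicitly granted by the stated $C^2$ regularity of the KL map, so invoking the minimum-value argument for $f \ge 0$ sidesteps any explicit interchange. If one wished to relax the uniform Hessian bound to a local one (as the footnote suggests), I would instead restrict to the high-probability event $\{\|\xi_\sigma\| \le R\}$ for an appropriate radius $R$, apply the same Taylor argument there, and absorb the tail contribution via a Gaussian concentration estimate, yielding the same bound up to constants.
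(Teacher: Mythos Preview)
Your proof is correct and follows essentially the same strategy as the paper: both observe that the KL map has vanishing value and gradient at the origin and then control the second-order term. The paper's version expands at $\eta=0$, identifies the Hessian there as the Fisher information matrix $\mathcal{I}(\theta_{\mathrm{wm}})$, and computes $\mathbb{E}[\xi_\sigma^\top \mathcal{I}\,\xi_\sigma]=\sigma^2\,\mathrm{tr}(\mathcal{I})$ before bounding the diagonal entries individually; your Lagrange-remainder variant is slightly more direct in that it invokes the stated uniform Hessian bound via the operator norm and yields a non-asymptotic pathwise inequality $f(\xi_\sigma)\le \tfrac{M}{2}\|\xi_\sigma\|^2$, sidestepping the paper's asymptotic $o(\|\xi_\sigma\|^2)$ remainder and the auxiliary per-coordinate Fisher bound.
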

Because the TV distance from the base model characterizes watermark strength and is upper bounded by the KL divergence via Pinsker’s inequality, Proposition~\ref{prop:tv_bound} (whose formal proof is deferred to Appendix~\ref{app:proof_prop1}) appears to suggest a fundamental quality-detectability trade-off for \gaussmark: enhancing watermark detectability by increasing $\sigma$ or the watermark subspace dimension $d_r$ inevitably increases the KL-based upper bound on the resulting quality distortion.

Nevertheless, we argue that such pessimism is overstated for two reasons. First, the base model $p_\theta$ should not be regarded as an oracle that perfectly characterizes the high-quality text but instead is a learned model with imperfections. Consequently, closeness \emph{to the base model} is not a necessary condition for achieving high-quality generation; rather, one should prefer closeness to some optimal distribution $p_{\theta^\star}$ that better expresses the true distribution of observed text.  Such intuition is expressed in Figure \ref{fig:stylized-illustration}, where the base model $p_\theta$ is suboptimal compared to the high-quality reference model $p_{\theta^\star}$; while \gaussmark\ attempts to maintain generation quality by remaining close to $\theta$, our proposed method \marktune\ (outlined below) instead seeks parameters that remain close to $\theta^\star$.

Second, KL divergence is a particularly stringent measure of distributional distortion, as it upper-bounds worst-case deviations across all possible events, which is far stricter than what is required for human-perceived quality.  Indeed, modern LMs are heavily over-parameterized, and their loss landscapes are known to contain broad, flat basins~\cite{2025understanding}. Consequently, given a base model $p_\theta$, there typically exist alternative weight configurations $p_{\theta^\star}$ that exhibit a significant TV distance from $p_\theta$ but that yield essentially the same generation quality on general or domain-specific tasks.  This perspective is also depicted in Figure \ref{fig:stylized-illustration}, where both \gaussmark\ and \marktune\ achieve similar watermark strength (denoted by the horizontal distance from $\theta$) but our proposed \marktune\ maintains better generation quality by remaining in the basin of the high-quality reference model $\theta^\star$.

\begin{figure}[t]
    \centering
    \begin{tikzpicture}
    \begin{axis}[
        width=9cm,
        height=5.5cm,
        domain=-1.4:1.4,
        samples=200,
        axis x line=bottom,
        axis y line=left, 
        axis line style={-stealth,line width=0.8pt},
        xlabel={watermark-sensitive direction},
        ylabel={generation loss},
        xmin=-1.45, xmax=1.45, 
        ymin=-0.05, ymax=2.0,
        xtick=\empty,
        ytick=\empty,
        clip=false,
        every axis plot/.style={smooth},
        label style={font=\small},
        ticklabel style={font=\small},
    ]

    \addplot[very thick, blue!65!black]
        {0.06*x^10 + 0.02*x^2};

    \addplot[
        only marks,
        mark=*,
        mark size=2.3pt,
        draw=green!45!black,
        fill=green!65!black
    ] coordinates {(-0.2,0.0008)};
    
    \addplot[
        only marks,
        mark=*,
        mark size=2.3pt,
        draw=purple!45!black,
        fill=purple!65!black
    ] coordinates {(0.6,0.00756)};

    \addplot[
        only marks,
        mark=*,
        mark size=2.3pt,
        draw=red!60!black,
        fill=red!70!black
    ] coordinates {(-0.8,0.01924245)};
    
    \addplot[
        only marks,
        mark=*,
        mark size=2.3pt,
        draw=teal!70!black,
        fill=teal!80!black
    ] coordinates {(1.3,0.86095)};

    \node[above=2pt]      at (axis cs:-0.1,0)       {$\theta^\star$};
    \node[above right=2pt]at (axis cs:0.4,0.0008) {$\theta$};
    \node[above left=2pt] at (axis cs:-0.25,0.01924245) {$\theta_{\marktune}$};
    \node[above left=2pt] at (axis cs:1.3,0.86095) {$\theta_{\gaussmark}$};

    \end{axis}
    \end{tikzpicture}
    \caption{Stylized one-dimensional landscape along a watermark-sensitive direction. Both $\theta_{\gaussmark}$ and $\theta_{\marktune}$ lie at nontrivial distances from the base model $\theta$ along this direction, leading to significant watermark detectability, but $\theta_{\marktune}$ resides within the flat high-quality basin around $\theta^\star$ and therefore incurs substantially less quality degradation than $\theta_{\gaussmark}$.}
    \label{fig:stylized-illustration}
\end{figure}
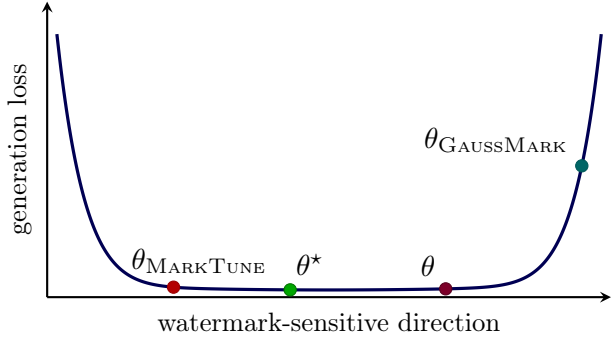

\paragraph{Design motivation.}
Inspired by the success of Reinforcement Learning from Human Feedback (RLHF)~\cite{ouyang2022training} and Reinforcement Learning with Verifiable Rewards (RLVR)~\cite{guo2025deepseek} in enhancing LLM capabilities, we propose \marktune, an on-policy fine-tuning framework that uses the \gaussmark\ test statistic as its reward. Unlike the rewards in RLHF and RLVR, which are typically produced by trained reward models, domain-specific verifiers, or gold-standard solutions, this test statistic is an intrinsic quantity that can be easily computed using a watermark key. 
Following prior work, we include a regularization term in the optimization objective to preserve generation quality. However, unlike conventional RL objectives~\cite{guo2025deepseek,ouyang2022training,2017proximal}, which regularize using the KL divergence between the trained model and the base model, we instead use the cross-entropy on alternative high-quality human text or the KL divergence to an oracle model that accurately reflects the target language distribution yet remains distinguishable from the base model in total variation. This choice provides a larger search space for watermark-aware weight updates and influences the training dynamics to evolve in favor of watermark-sensitive directions along which generation quality decays relatively slowly, thereby yielding a more favorable quality-detectability trade-off.

\paragraph{Optimization with dual objectives.}
Given a base language model $p_\theta$ with $\theta = (\theta_{\mathrm{wm}}, \theta_0)$ and a watermark key $\xi_\sigma \sim \mathcal{N}(0, \sigma^2 \mathbb{I}_{d_r})$ for small $\sigma$, \marktune\ first applies a \gaussmark\ soft activation $\theta(\xi_\sigma) = (\theta_{\mathrm{wm}} + \xi_\sigma, \theta_0)$, resulting in a weak watermark signal that maintains generation quality. 
The base model $p_\theta$ is subsequently fixed, and the training objective is given by
\begin{equation}
\label{eq:overall_opt}
    \max_{\theta(\xi_\sigma)}\;
    \mathbb{E}_{x \sim \mathcal{D},\; y \sim p_{\theta(\xi_\sigma)}(\cdot \mid x)}
    \!\left[\mathtt{sg}\left(\frac{\langle \xi_\sigma, \nabla_{\theta_{\mathrm{wm}}}\log p_\theta(y)\rangle}{\sigma \|\nabla_{\theta_{\mathrm{wm}}}\log p_\theta(y)\|_2}\right)\right] - \lambda\, \mathcal{L}_{\mathrm{reg}}\left(\theta(\xi_\sigma); x, y\right),
\end{equation}
where the first term is the watermark signal reward $\mathcal{R}_{\mathrm{wm}}(y;\xi_\sigma)$, formulated using the \gaussmark\ test statistic.\footnote{We ignore the prompt $x$ to match the real-world detection setting.} The operator $\mathtt{sg}(\cdot)$ denotes stop-gradient: the gradient $\nabla_{\theta_{\mathrm{wm}}}\log p_\theta(y)$ is computed on a fixed $p_\theta$ and treated as constant with respect to $\theta(\xi_\sigma)$, so no backpropagation passes through it. The term $\mathcal{L}_{\mathrm{reg}}$ is a regularization term, which can be either the cross-entropy loss $\mathcal{L}_{\mathrm{ce}} = -\frac{1}{T}\sum_{t=1}^T \log p_{\theta(\xi_\sigma)}(x_t \mid x_{<t})$ or the KL divergence to an oracle model $\mathcal{L}_{\mathrm{kl}} = \mathrm{D_{KL}}\!\left(p_{\theta(\xi_\sigma)}(y \mid x)\,\|\,p_{\text{oracle}}(y \mid x)\right)$\footnote{We conduct an ablation study on the selection of regularization term in Section~\ref{sec:ablation}.}. $\lambda > 0$ denotes a hyperparameter that balances the watermark reward against the regularization on text quality degradation. By rewarding generated samples $y \sim p_{\theta(\xi_\sigma)}(\cdot \mid x)$ that produce stronger watermark signals, while penalizing deviations from high-quality text through $\mathcal{L}_{\mathrm{reg}}$, this dual-objective, on-policy fine-tuning framework improves the quality-detectability trade-off of vanilla \marktune. The high-level procedure of \marktune\ is summarized in Algorithm~\ref{alg:marktune}. For practical implementation, we employ GRPO~\cite{shao2024deepseekmath} to optimize the overall objective~\eqref{eq:overall_opt}, with algorithm details provided in Appendix~\ref{app:alg_details}.
\begin{algorithm}[!t]
  \caption{\marktune\ with \gaussmark\ soft activation}
  \label{alg:marktune}
  \begin{algorithmic}[1]
    \STATE \textbf{Input:} Language model $p_\theta$ with $\theta = (\theta_{\mathrm{wm}}, \theta_0)$, strength $\sigma>0$, text dataset $\mathcal{D}$, regularization coefficient $\lambda > 0$.
    \STATE Sample watermark key
      $\displaystyle \xi_\sigma \sim \mathcal{N}(0, \sigma^2\mathbb{I}_{d_r})$.
    \STATE Inject \gaussmark\ perturbation to obtain $\theta(\xi_\sigma) = (\theta_{\mathrm{wm}} + \xi_\sigma,\theta_0)$.
    \STATE Conduct GRPO Algorithm~\ref{alg:GRPO-PO} for $\theta(\xi_\sigma)$ to obtain $\theta^\star(\xi_\sigma)$ via optimizing the objective~\eqref{eq:overall_opt}.
    \STATE \textbf{Output:} Watermarked weights $\theta^\star(\xi_\sigma)$.
  \end{algorithmic}
\end{algorithm}

\begin{remark}
    \upshape  We emphasize that, in principle, \marktune\ can serve as a general on-policy fine-tuning framework for improving the quality-detectability trade-off of any weight-editing watermarking by replacing the first term in~\eqref{eq:overall_opt} with the corresponding test statistic or any computable detection rule as the reward.  We choose \gaussmark\ in this work due to the theoretical motivation outlined below.
\end{remark}

\paragraph{Type I error control.} For watermark detection, we inherit the procedure in \gaussmark. Since the generated text $y$ remains independent of $\xi_\sigma$ under $\hnull$, Proposition~\ref{prop:FPR_control}, whose proof is in Appendix \ref{app:proof_2}, provides rigorous statistical guarantees on the false positive rate. 
\begin{proposition} \label{prop:FPR_control}
    Let $\alpha \in (0,1)$, and $\tau_\alpha := \Phi^{-1}(1-\alpha)$ where $\Phi$ is the CDF of the standard normal distribution. Then, for any $y \in \mathcal{Y}$, the test $\mathbb{I}\!\left\{\frac{\langle \xi_\sigma, \nabla_{\theta_{\mathrm{wm}}}\log p_\theta(y)\rangle}{\sigma \|\nabla_{\theta_{\mathrm{wm}}}\log p_\theta(y)\|_2} \ge \tau_\alpha \right\}$ has level $\alpha$.
\end{proposition}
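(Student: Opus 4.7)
The plan is to prove the statement by conditioning on $y$ and exploiting the rotational invariance of the isotropic Gaussian. Throughout I'll work under $\hnull$, so by definition of $\hnull = \{\rho \otimes q : q \in \Delta(\cY)\}$, the key $\xi_\sigma \sim \mathcal{N}(0, \sigma^2 \mathbb{I}_{d_r})$ is independent of the text $y$.

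First, I would fix an arbitrary $y \in \cY$ (and the prompt $x$, which the detector may or may not see) and condition on it. Since $\xi_\sigma \perp y$ under $\hnull$, the conditional law of $\xi_\sigma$ given $y$ is unchanged, namely $\mathcal{N}(0, \sigma^2 \mathbb{I}_{d_r})$. Under this conditioning, the gradient $g := \nabla_{\thetawm} \log p_\theta(y)$ is a deterministic vector in $\mathbb{R}^{d_r}$, because $p_\theta$ is the fixed reference model and $y$ is fixed.

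Second, I would rewrite the test statistic as
\begin{equation*}
    \psi(y, \xi_\sigma \mid x) \;=\; \left\langle \frac{\xi_\sigma}{\sigma},\; \frac{g}{\|g\|_2} \right\rangle,
\end{equation*}
i.e., the inner product of a standard Gaussian vector $\xi_\sigma/\sigma \sim \mathcal{N}(0, \mathbb{I}_{d_r})$ with the deterministic unit vector $g/\|g\|_2$. By rotational invariance of the isotropic Gaussian (or directly: a weighted sum of independent standard normals with coefficients summing in square to $1$), this inner product is distributed as $\mathcal{N}(0,1)$ conditional on $y$. Hence
\begin{equation*}
    \Pr_{\xi_\sigma}\!\left[\psi(y, \xi_\sigma \mid x) \geq \tau_\alpha \;\middle|\; y\right] \;=\; 1 - \Phi(\tau_\alpha) \;=\; \alpha,
\end{equation*}
by the definition $\tau_\alpha := \Phi^{-1}(1-\alpha)$. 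Taking expectation over $y$ (or invoking the tower property) yields that the unconditional false positive rate under any distribution in $\hnull$ is at most (in fact, equal to) $\alpha$.

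The only technical wrinkle, and the one I would flag as the main obstacle to a clean statement, is the degenerate case $\|g\|_2 = 0$ which makes the ratio undefined. I would handle this by adopting the convention that the test outputs $0$ (non-rejection) on this measure-zero-like event; since this can only decrease the rejection probability, it preserves the level $\alpha$ bound. Beyond that caveat, the argument is essentially a one-line consequence of the independence of $\xi_\sigma$ and $y$ under $\hnull$ combined with the rotational invariance of the standard Gaussian, so no deeper machinery is needed.
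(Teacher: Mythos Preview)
Your proof is correct and follows essentially the same approach as the paper: condition on $y$, use independence under $\hnull$ so that the statistic is the inner product of a standard Gaussian with a fixed unit vector, hence $\mathcal{N}(0,1)$, and conclude via the definition of $\tau_\alpha$. Your treatment is slightly more careful in that you explicitly invoke rotational invariance and address the degenerate case $\|g\|_2=0$, which the paper's proof leaves implicit.
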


\paragraph{Informal analysis.} 
We can gain intuition for why \marktune\ is superior to \gaussmark\ in the simple linear-softmax regime that is often considered as an analytically tractable proxy for gaining insight and intuition in language modeling settings \cite{2025gaussmark,foster2025good,huang2025self}.  In this setting, we suppose that the ``language model'' $p_\theta(y | x) \propto \exp(\langle \theta, \Phi(x,y)\rangle)$, where $\Phi(x,y)$ denotes a fixed feature map; such a setting is partially motivated by supposing that all except for the last layer of a transformer are frozen.  Suppose that $\theta^\star$ denotes the parameter that yields the best possible language model, while $\theta$ denotes the base language model.  Let $\mathcal{L}_{\mathrm{reg}}$ be crossentropy on text generated from $p_{\theta^\star}$ and noting that this is the same as $ \mathrm{D_{KL}}\!\left(p_{\theta'} \| p_{\theta^\star}\right)$ up to an additive constant independent of $\theta'$, we see that the regularized objective in~\eqref{eq:overall_opt} becomes a KL-regularized policy optimization problem.  Thus, applying the standard Donsker-Varadhan reparameterization trick~\cite{donsker1975asymptotic,rafailov2023direct}, we see that if trained to completion (and ignoring the rescaling in \eqref{eq:overall_opt} for the sake of simplicity), $p_{\theta_{\marktune}}(y | x) \propto p_{\theta^\star}(y | x) \cdot \exp(\langle \xi, \Phi(x,y)\rangle)$.  Thus, under the linear-softmax assumption, it holds that $p_{\theta_{\marktune}}(y|x) \propto \exp(\langle \theta^\star + \xi, \Phi(x,y)\rangle)$ in contradistinction to \gaussmark, which satisfies $p_{\theta_{\gaussmark}}(y | x) \propto \exp(\langle \theta + \xi, \Phi(x,y)\rangle)$.  Thus, at a high level, \marktune\ is applying \gaussmark\ to the \emph{optimal parameter} instead of that of the base model, which explains why \marktune\ is capable of improving on \gaussmark\ with respect to the quality-detectability frontier.  In this way, the analyses of the \emph{statistical power} of \marktune\ follow precisely from those of \gaussmark\ contained in \cite{2025gaussmark}.  

We further make the above intuition concrete in Appendix~\ref{app:marktune_analysis}. In a neighborhood of the high-quality distribution $p_{\theta^\star}$, one can identify a deviation $\delta\theta$ along the watermark-sensitive direction such that the watermark reward, and thus the mean of the test statistic, increases at \emph{first order} in $\kappa = \tfrac{1}{\lambda}$, while the resulting increase in cross-entropy (or KL) to $p_{\theta^\star}$ appears only at \emph{second order}. This provides additional theoretical insights into why \marktune\ can achieve an adaptive and quality-preserving watermarking of model weights.

\section{Experiments} \label{sec:exper}

In this section, we evaluate the performance of \marktune\ on two LMs and datasets to demonstrate its broad applicability across models and its generalizability to different tasks. Specifically, we verify its efficacy along four key dimensions: 1) comparison of generation quality and watermark detection performance against existing watermarking schemes, including both inference-time and model-embedded approaches; 2) impact of watermarking on downstream task performance; 3) robustness to text-editing attacks; and 4) robustness to fine-tuning attacks. \textbf{Throughout all experiments, we use \emph{prompt-agnostic} detection, mirroring practical deployment scenarios where the prompt is unknown to the detector}.  We instantiate the \marktune\ detector with the statistic in~\eqref{eq:test_sta} evaluated without the prompt,
\[
\psi(y,\xi_\sigma)
=
\frac{\langle \xi_\sigma,\, \nabla_{\theta_{\mathrm{wm}}}\log p_\theta(y)\rangle}
     {\sigma \|\nabla_{\theta_{\mathrm{wm}}}\log p_\theta(y)\|_2},
\]
where the gradient is taken with respect to the log-likelihood of the entire text under the base model $p_\theta$, treating $y$ as a standalone sequence. 
Further implementation details are provided in Appendix~\ref{app:imp_details}.

\subsection{Experimental Setup} 

\paragraph{Models and datasets.}
In our evaluation, we consider two open-weight LLMs—Qwen3-4B~\cite{2025qwen3} and Llama2-7B~\cite{2023llama}. The former is relatively new, while the latter is widely used in prior watermarking work~\cite{2023Robust,2023dipmark,xu2024learning,2024gu,2024zhaopermute}. Following the guidelines in~\cite{2025qwen3,2023llama}, we use a sampling temperature of 0.7 for Qwen3-4B and 1.0 for Llama2-7B to ensure high-quality generation.

We evaluate watermarking performance on two text generation tasks: 1) text completion and 2) question answering. For text completion, we use the \textbf{RealNewsLike} split of the \textbf{C4} dataset~\cite{raffel2020exploring} to serve as prompts. For long-form question answering, we use the \textbf{ELI5} dataset~\cite{fan2019eli5}, where the model generates detailed responses to the questions. For both tasks, we randomly sample 500 prompts and generate responses of 200 tokens. For \marktune\ fine-tuning, we use a subset of \textbf{OpenWebText}~\cite{Gokaslan2019OpenWeb} as the training corpus, with prompt lengths ranging from 64 to 256 tokens for on-policy sampling. \textbf{We emphasize that the dataset on which we train \marktune\ is disjoint and very different from the evaluation datasets, demonstrating its ability to generalize generation of watermarked text across domains.}

\paragraph{Baselines and metrics.}
We benchmark our method against a range of representative watermarking schemes, including inference-time methods KGW~\cite{2023KGW} with two different hyperparameter settings (KGW-1 and KGW-2), OpenAI’s Gumbel-max~\cite{2023Aaronson}, Google’s SynthID~\cite{2024scalable}, and model-embedded methods such as KGW-D~\cite{2024gu}, which imitates KGW via logit distillation, and \gaussmark~\cite{2025gaussmark}. We omit RL-based watermarking~\cite{xu2024learning} because we were unable to train it to achieve non-trivial detection power across datasets under multinomial sampling. To ensure a fair detectability comparison, we select hyperparameters that allow each watermarking scheme to achieve nontrivial detection performance while incurring minimal degradation in text quality.

We evaluate both watermark detectability and generated text quality. For watermark detectability, we report the true positive rate (TPR) at a fixed false positive rate of 1\% (TPR@FPR=1\%) as well as the area under the ROC curve (AUC). To assess generation quality, we compute perplexity (PPL) using Qwen3-8B and Llama2-13B as oracle language models corresponding to each evaluated model. To measure repetitiveness, we compute Seq-rep-3, defined as the average repetition rate of duplicate 3-grams in a sequence. We also report MAUVE scores to quantify the distributional similarity between model generations and human text~\cite{2021mauve}.

\subsection{Main Results} \label{sec:main_result}

\begin{table}[!t]
\centering
\small
\caption{Text generation quality and watermark detectability for different methods. Best performances among model-embedded watermarks are shown in \textbf{bold}.}

\renewcommand{\arraystretch}{1.15}

\resizebox{\textwidth}{!}{
\begin{tabular}{l|ccccc|ccccc}
\toprule
\multicolumn{1}{l|}{\multirow{2}{*}{\textbf{Method}}} &
\multicolumn{5}{c|}{\textbf{C4-RealNewsLike}} &
\multicolumn{5}{c}{\textbf{ELI5}} \\
 & \textbf{AUC$\uparrow$} & \textbf{TPR$\uparrow$} & \textbf{PPL$\downarrow$} & \textbf{Seq-rep-3$\downarrow$} & \textbf{MAUVE$\uparrow$} & \textbf{AUC$\uparrow$} & \textbf{TPR$\uparrow$} & \textbf{PPL$\downarrow$} & \textbf{Seq-rep-3$\downarrow$} & \textbf{MAUVE$\uparrow$} \\
\midrule
\multicolumn{11}{c}{\cellcolor{blkC}\textbf{Qwen3-4B, T=0.7}} \\
\midrule
No Watermark & 0.500 & 0.010 & \(4.92\err{1.34}\) &  0.04 & 0.98  & 0.500 & 0.010 & $2.67\err{0.94}$ & 0.03  & 0.94 \\
KGW-1    & 0.996 & 0.964 & \(5.51\err{1.56}\) &  0.04 & 0.98 & 0.971 & 0.710 & $3.04\err{1.12}$ &  0.04 & 0.92 \\
KGW-2    & 1.000 & 0.998 & $6.65\err{1.91}$ & 0.04 & 0.96 & 0.996 & 0.988 & $3.69\err{1.51}$ & 0.03 & 0.87 \\
Gumbel-max   & 0.995 & 0.980 & $4.89\err{2.07}$ & 0.18 & 0.74 & 0.991 & 0.938 & $2.71\err{1.01}$  & 0.13 & 0.77 \\
SynthID     & 0.996 & 0.986 & $5.05\err{1.54}$ & 0.04 & 0.96 & 0.993 & 0.946 & $2.76\err{0.95}$  & 0.06 & 0.91 \\
\midrule
KGW-D & 0.981 & 0.762 & $5.54\err{1.53}$ &  0.04 & 0.98 & 0.941 & 0.542 & $3.09\err{1.17}$  & 0.04 & 0.92 \\
GaussMark   & 0.983 & 0.788 & \(5.16\err{1.54}\)  & 0.04 & \textbf{0.98} & 0.979 & 0.734 & $2.83\err{1.07}$  & 0.03 & 0.93 \\
\cellcolor{blkA}\textbf{MarkTune}    &\cellcolor{blkA}\textbf{0.995} &\cellcolor{blkA}\textbf{0.964} &\cellcolor{blkA}$\mathbf{5.04\err{1.37}}$  &\cellcolor{blkA}\textbf{0.03} &\cellcolor{blkA}0.97 & \cellcolor{blkA}\textbf{0.996} & \cellcolor{blkA}\textbf{0.942} & \cellcolor{blkA}$\mathbf{2.78\err{1.03}}$  & \cellcolor{blkA}\textbf{0.03} & \cellcolor{blkA}\textbf{0.93} \\
\midrule
\multicolumn{11}{c}{\cellcolor{blkC}\textbf{Llama2-7B, T=1.0}} \\
\midrule
No Watermark & 0.500 & 0.010 & $6.55\err{1.80}$ & 0.01 & 0.98  & 0.500 & 0.010 & $7.86\err{2.23}$ & 0.02  & 0.97 \\
KGW-1    & 0.991 & 0.886 & $7.09\err{2.04}$ & 0.02 & 0.96 & 0.993 & 0.946 & $8.47\err{2.39}$ &  0.04 & 0.94 \\
KGW-2    & 0.997 & 0.996 & $8.14\err{2.36}$ & 0.01 & 0.95 & 1.000 & 1.000 & $9.61\err{2.68}$ & 0.03 & 0.91 \\
Gumbel-max   & 0.997 & 0.988 & $6.56\err{1.84}$ & 0.14 & 0.86 & 0.998 & 0.994 & $7.98\err{2.29}$  & 0.18 & 0.81 \\
SynthID     & 0.995 & 0.994 & $6.73\err{1.94}$ & 0.03 & 0.94 & 0.997 & 0.988 & $8.06\err{2.41}$  & 0.09 & 0.89 \\
\midrule
KGW-D & 0.945 & 0.582 & $7.04\err{1.98}$ & 0.02 & 0.98 & 0.969 & 0.754 & $8.53\err{2.41}$  & 0.05 & 0.96 \\
GaussMark   & 0.974 & 0.712 & $6.78\err{1.91}$  & 0.01 & \textbf{0.98} & 0.958 & 0.622 & $\mathbf{8.13\err{2.15}}$  & 0.02 & 0.94 \\
\cellcolor{blkA}\textbf{MarkTune}    &\cellcolor{blkA}\textbf{0.997} &\cellcolor{blkA}\textbf{0.982} &\cellcolor{blkA}$\mathbf{6.75\err{1.96}}$  &\cellcolor{blkA}\textbf{0.01} &\cellcolor{blkA}0.96 & \cellcolor{blkA}\textbf{0.993} & \cellcolor{blkA}\textbf{0.930} & \cellcolor{blkA}$8.19\err{2.27}$  & \cellcolor{blkA}\textbf{0.01} & \cellcolor{blkA}\textbf{0.96} \\
\bottomrule
\end{tabular}}
\label{tab:wm_distortion}
\end{table}

Table~\ref{tab:wm_distortion} reports the generation quality and detection performance of different watermarking methods across models and datasets. We observe that \textbf{\textsc{MarkTune} consistently outperforms other model-embedded watermarking approaches in detectability while introducing only minimal degradation in text quality} (all three quality metrics remain close to the unwatermarked baseline). It also matches the performance of inference-time watermarking methods, achieving comparable generation quality and detection strength. KGW attains high detectability but at the cost of substantial perplexity degradation. In contrast, Gumbel-max generally offers strong detectability and low perplexity, but suffers from notable repetitiveness due to its deterministic decoding procedure. SynthID generally performs well in both detectability and generation quality.

Figure~\ref{fig:trade-off} illustrates the quality-detectability trade-off for all watermarking schemes on \textbf{C4-RealNewsLike}.\footnote{We omit the trade-off curve of KGW-D since its performance is upper bounded by KGW.} For each method, we sweep a scheme-specific parameter that controls the watermark strength to generate the trade-off curve, except for Gumbel-max and SynthID, which are non-distortionary and therefore yield a single TPR-PPL point. Compared with \gaussmark\ and KGW, \marktune\ achieves a significantly more favorable trade-off, yielding higher detection rates for the same level of perplexity distortion.

\paragraph{Impact on downstream tasks performance.}

Because watermarking strategies alter the output distributions of LLMs, it is essential to ensure that the utility of the underlying model is not  compromised. Although the results in Table~\ref{tab:wm_distortion} indicate minimal degradation in generation quality—measured by perplexity, Seq-rep-3, and MAUVE—recent work~\cite{2025gaussmark,2024downstream,2024scalable} has shown that such metrics do not reliably predict the performance degradation introduced by watermarking on downstream tasks. Therefore, we further evaluate the watermarked models on downstream tasks using the Language Model Evaluation Harness~\cite{2021framework} across three benchmarks: MMLU~\cite{2020mmlu} for general knowledge and reasoning, GSM8K~\cite{cobbe2021gsm8k} for mathematical problem solving, and MBPP~\cite{2021mbpp} for coding tasks. Figure~\ref{fig:downstream} reports the accuracy of three model-embedded watermarking schemes on these benchmarks, relative to the unwatermarked model. On the MMLU benchmark, all methods exhibit negligible accuracy degradation, indicating that they preserve the model’s general capabilities. On the GSM8K and MBPP benchmarks, \marktune\ ranks first or second, substantially improving over vanilla \gaussmark, especially on MBPP. We attribute this to the fact that \marktune\ more effectively guides the training dynamics toward ``directions'' that simultaneously enhance the watermark signal and preserve utility, starting from only a slight weight perturbation. In contrast, \gaussmark\ requires a much larger perturbation to produce a nontrivial watermark signal, which can negatively affect downstream performance. We provide watermarked examples in Appendix~\ref{app:examples} for generation quality illustration beyond these quantitative metrics. 

\begin{figure}[!t]
  \centering
  \begin{subfigure}{0.48\textwidth}
    \centering
    \includegraphics[width=\linewidth]{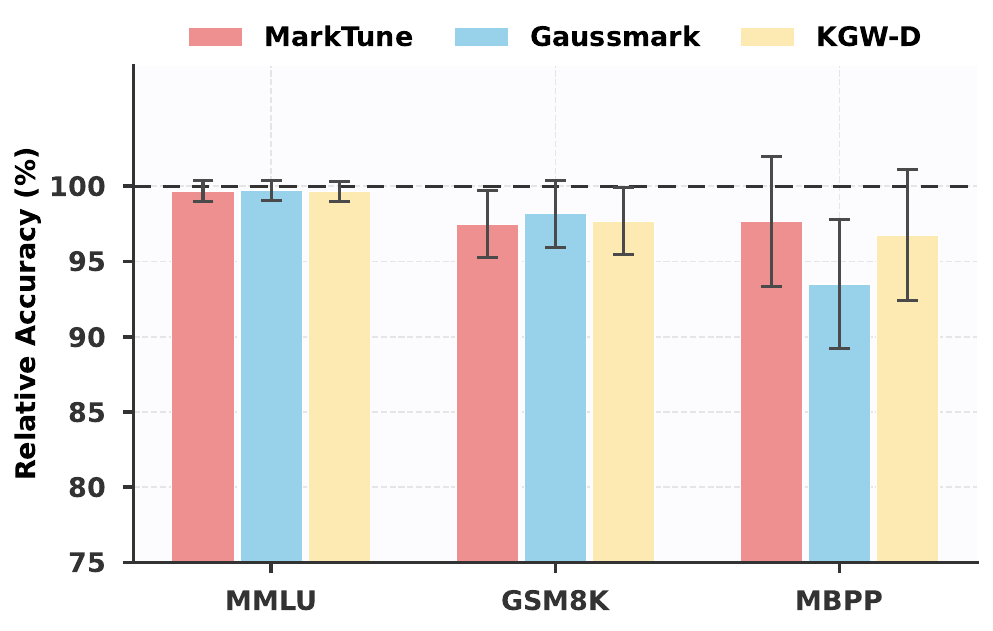}
    \caption{Qwen3-4B}
    \label{fig:down_qwen3}
  \end{subfigure}
  \hfill
  \begin{subfigure}{0.48\textwidth}
    \centering
    \includegraphics[width=\linewidth]{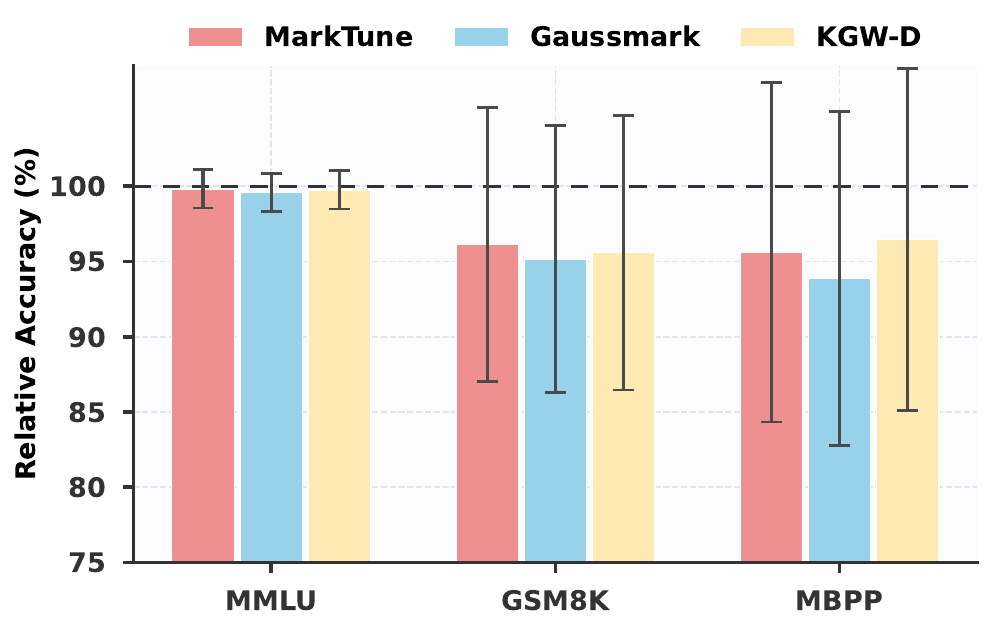}
    \caption{Llama2-7B}
    \label{fig:down_llama2}
  \end{subfigure}
  \caption{Relative downstream task accuracy compared to unwatermarked models across the general, math, and coding benchmarks.}
  \label{fig:downstream}
\end{figure}

\subsection{Robustness Evaluation} \label{sec:robust}

\paragraph{Robustness to text-editing attacks.}
Text editing is common in practical LLM usage and can weaken watermark signals. We evaluate watermark robustness against four widely used text-editing attacks: word deletion, synonym substitution, paraphrasing, and roundtrip translation. For the first two attacks, we randomly select a fraction of watermarked tokens and either delete them or replace them with synonyms sampled from WordNet~\cite{1995wordnet}. For paraphrasing, we apply two Dipper~\cite{krishna2023paraphrasing} configurations that vary in lexical and order diversity. We additionally include a more challenging paraphrasing attack by performing roundtrip translation: we translate the watermarked text into French using \texttt{Helsinki-NLP/opus-mt-tc-big-en-fr} and translate it back into English using \texttt{Helsinki-NLP/opus-mt-tc-big-fr-en}~\cite{2024democratizing}.

For word deletion and synonym substitution attacks, KGW-D often performs strongly among model-embedded watermarking methods, as it distilled from the one-token-prefix KGW watermark. While \marktune\ shows consistent improvement over \gaussmark, but still exhibits notable decay in watermark signal under higher modification rates. \emph{However, we emphasize that such token-level attacks are relatively crude and do not constitute realistic threat models, as they substantially degrade text quality.} To illustrate this degradation, in Table~\ref{tab:robust_extend} (deferred to the appendix for space), we report the perplexity of texts after applying various editing attacks. We observe that token-level attacks substantially degrade quality, whereas paraphrasing attacks—against which \marktune\ is notably robust—preserve text quality while reducing the detectability of alternative watermarks. A more meaningful evaluation focuses on paraphrasing attacks. Across all paraphrasing settings, our method outperforms all watermarking baselines—including strong inference-time approaches—demonstrating its robustness and promising applicability under realistic adversarial deployment.

\begin{table}[!t]
\centering
\small
\caption{Watermark detectability under text-editing and paraphrasing attacks for different methods on \textbf{C4-RealNewsLike}. ``Del-$n$'' and ``Sub-$n$'' denote attacks that delete or substitute a proportion of $n$ tokens, respectively. ``Dipper-1'' refers to Dipper paraphrasing with lexical and order diversity set to 20, while ``Dipper-2'' uses a diversity level of 60. ``Translate'' denotes the roundtrip translation attack described in Section~\ref{sec:robust}. Best performances among model-embedded watermarks are shown in \textbf{bold}.}

\renewcommand{\arraystretch}{1.15}

\resizebox{\textwidth}{!}{
\begin{tabular}{l|cc|cc|cc|cc|cc|cc|cc}
\toprule
\multicolumn{1}{l|}{\multirow{2}{*}{\textbf{Method}}} &
\multicolumn{2}{c|}{\textbf{Del-0.2}} & \multicolumn{2}{c|}{\textbf{Del-0.5}} &
\multicolumn{2}{c|}{\textbf{Sub-0.2}} & \multicolumn{2}{c|}{\textbf{Sub-0.5}} &
\multicolumn{2}{c|}{\textbf{Dipper-1}} & \multicolumn{2}{c|}{\textbf{Dipper-2}} & \multicolumn{2}{c}{\textbf{Translate}} \\
 & \textbf{AUC$\uparrow$} & \textbf{TPR$\uparrow$} & \textbf{AUC$\uparrow$} & \textbf{TPR$\uparrow$} & \textbf{AUC$\uparrow$} & \textbf{TPR$\uparrow$} & \textbf{AUC$\uparrow$} & \textbf{TPR$\uparrow$} & \textbf{AUC$\uparrow$} & \textbf{TPR$\uparrow$} & \textbf{AUC$\uparrow$} & \textbf{TPR$\uparrow$} & \textbf{AUC$\uparrow$} & \textbf{TPR$\uparrow$} \\
\midrule
\multicolumn{15}{c}{\cellcolor{blkC}\textbf{Qwen3-4B, T=0.7}} \\
\midrule
KGW-1    & 0.881 & 0.330 & 0.617 &  0.040 & 0.873 & 0.324 & 0.638 & 0.048 &  0.771 & 0.142 & 0.573 & 0.026  & 0.831 & 0.242 \\
KGW-2    & 0.975 & 0.746 & 0.697 & 0.082 & 0.986 & 0.826 & 0.747 & 0.106 & 0.884 & 0.372 & 0.616 & 0.038  & 0.928 & 0.552 \\
Gumbel-max   & 0.993 & 0.948 & 0.778 & 0.162 & 0.993 & 0.922 & 0.840 & 0.348  & 0.945 & 0.748 & 0.721 & 0.146  & 0.955 & 0.846 \\
SynthID     & 0.987 & 0.890 & 0.780 & 0.094 & 0.986 & 0.846 & 0.834 & 0.182 & 0.941 & 0.576 & 0.700 & 0.074  & 0.968 & 0.818 \\
\midrule
KGW-D & \textbf{0.947} & \textbf{0.582} & \textbf{0.851} & \textbf{0.196} & 0.939 & \textbf{0.578} & \textbf{0.837} & \textbf{0.190}  & 0.855 & 0.274 & 0.589 & 0.030  & 0.912 & 0.404 \\
GaussMark   & 0.908 & 0.366 & 0.622  & 0.048 & 0.919 & 0.384 & 0.775 & 0.100  & 0.915 & 0.438 & 0.795 & 0.138  & 0.948 & 0.574 \\
\cellcolor{blkA}\textbf{MarkTune}    &\cellcolor{blkA}0.921 &\cellcolor{blkA}0.484 &\cellcolor{blkA}0.744  &\cellcolor{blkA}0.068 &\cellcolor{blkA}\textbf{0.948} & \cellcolor{blkA}0.552 & \cellcolor{blkA}0.783 & \cellcolor{blkA}0.110  & \cellcolor{blkA}\textbf{0.977} & \cellcolor{blkA}\textbf{0.852} & \cellcolor{blkA}\textbf{0.851} & \cellcolor{blkA}\textbf{0.254}  & \cellcolor{blkA}\textbf{0.973} & \cellcolor{blkA}\textbf{0.858} \\
\midrule
\multicolumn{15}{c}{\cellcolor{blkC}\textbf{Llama2-7B, T=1.0}} \\
\midrule
KGW-1    & 0.853 & 0.232 & 0.581 & 0.024 & 0.878 & 0.300 & 0.638 & 0.030 &  0.757 & 0.116  & 0.551 & 0.014  & 0.777 & 0.146\\
KGW-2    & 0.975 & 0.712 & 0.711 & 0.060 & 0.980 & 0.756 & 0.772 & 0.126 & 0.862 & 0.322 & 0.614 & 0.032  & 0.922 & 0.448 \\
Gumbel-max   & 0.993 & 0.924 & 0.790 & 0.224 & 0.993 & 0.950 & 0.862 & 0.296 & 0.950 & 0.700 & 0.711 & 0.148  & 0.961 & 0.796 \\
SynthID     & 0.979 & 0.774 & 0.777 & 0.114 & 0.987 & 0.862 & 0.843 & 0.140  & 0.943 & 0.552 & 0.710 & 0.084  & 0.949 & 0.612 \\
\midrule
KGW-D & 0.914 & 0.410 & \textbf{0.803} & \textbf{0.148} & 0.887 & 0.406 & 0.759 & \textbf{0.144}  & 0.814 & 0.238 & 0.590 & 0.026  & 0.889 & 0.324 \\
GaussMark   & 0.835 & 0.242 & 0.617  & 0.036 & 0.853 & 0.228 & 0.759 & 0.098  & 0.857 & 0.240 & 0.687 & 0.060  & 0.925 & 0.436 \\
\cellcolor{blkA}\textbf{MarkTune}    &\cellcolor{blkA}\textbf{0.928} &\cellcolor{blkA}\textbf{0.506} &\cellcolor{blkA}0.783  &\cellcolor{blkA}0.080 &\cellcolor{blkA}\textbf{0.944} & \cellcolor{blkA}\textbf{0.578} & \cellcolor{blkA}\textbf{0.809} & \cellcolor{blkA}0.122  & \cellcolor{blkA}\textbf{0.961} & \cellcolor{blkA}\textbf{0.836} & \cellcolor{blkA}\textbf{0.859} & \cellcolor{blkA}\textbf{0.270}  & \cellcolor{blkA}\textbf{0.966} & \cellcolor{blkA}\textbf{0.834} \\
\bottomrule
\end{tabular}}
\label{tab:robust}
\end{table}

\begin{figure}[!t]
  \centering
  \begin{subfigure}{0.48\textwidth}
    \centering
    \includegraphics[width=\linewidth]{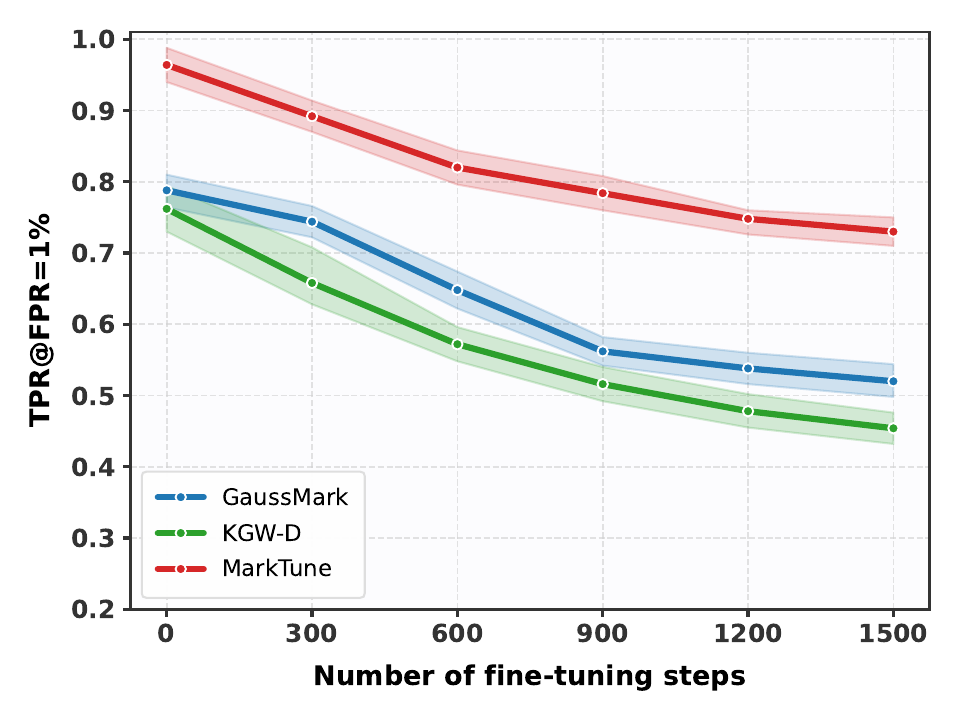}
    \caption{Qwen3-4B}
    \label{fig:ft_qwen3}
  \end{subfigure}
  \hfill
  \begin{subfigure}{0.48\textwidth}
    \centering
    \includegraphics[width=\linewidth]{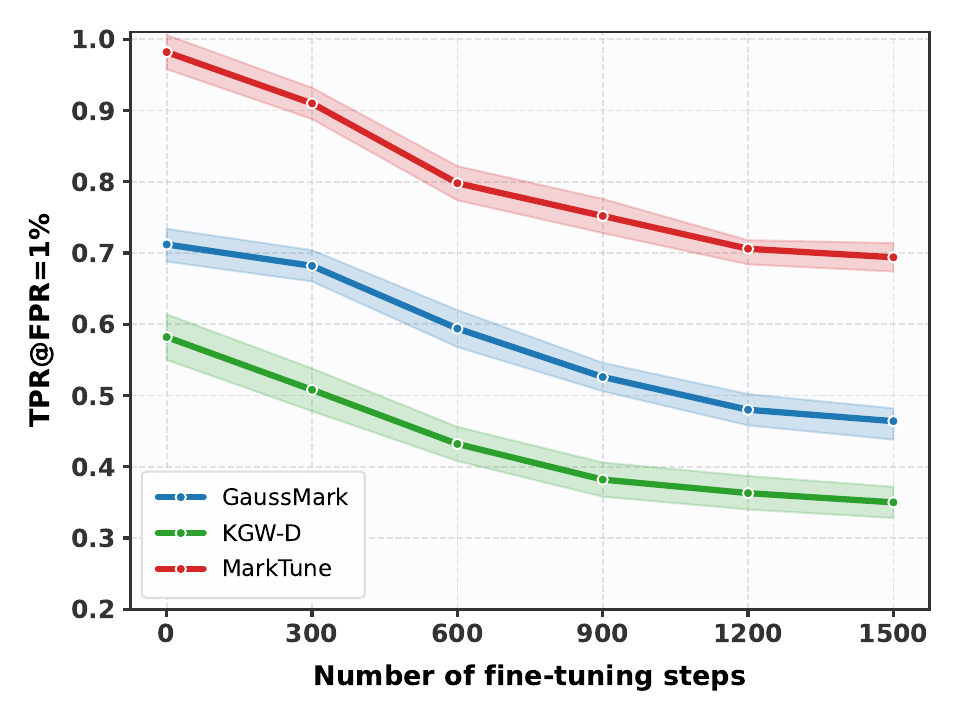}
    \caption{Llama2-7B}
    \label{fig:ft_llama2}
  \end{subfigure}
  \caption{Detectability (TPR@1\%FPR) decay under LoRA fine-tuning attack.}
  \label{fig:ft-attack}
\end{figure}

\paragraph{Robustness to fine-tuning attack.}

Given the open-weight deployment setting, we also evaluate the robustness of weight-embedded watermarks to fine-tuning attacks. With the growing adoption of LoRA as a PEFT technique, we consider an adversary who attempts to remove the watermark by further fine-tuning the model on \textbf{OpenWebText} using LoRA, following the setup in prior work~\cite{2025towards}. As shown in Figure~\ref{fig:ft-attack}, although \marktune\ enhances the watermark via fine-grained on-policy weight updates, it does not exhibit a faster TPR decay rate than the other two methods. Owing to its strong initial watermark signal, it retains substantial detection power even after 1500 fine-tuning steps, in contrast to \gaussmark\ and KGW-D.

\subsection{Ablation Study} \label{sec:ablation}

\begin{table}[!t]
\centering
\caption{Ablation study on regularization hyperparameters. ``Base'' denotes the watermarked trained using \marktune\ without a regularization term. Configuration highlighted in \textbf{bold} indicates Pareto-optimal performance.}
\label{tab:ablation_reg}
\renewcommand{\arraystretch}{1.0}

\begin{tabular}{lcccccc}
\toprule
\textbf{Method} & \textbf{AUC$\uparrow$} & \textbf{TPR$\uparrow$} & \textbf{PPL$\downarrow$} & \textbf{Seq-rep-3$\downarrow$} & \textbf{MAUVE$\uparrow$}\\
\midrule 
Base       & 0.999 & 0.992 & $5.88\err{1.55}$ & 0.06 & 0.89\\
+ CE-0.005 & 0.997 & 0.976 & $5.35\err{1.43}$ & 0.05 & 0.94\\
\textbf{+ CE-0.01} & \textbf{0.995} & \textbf{0.964} & $\mathbf{5.04\err{1.37}}$ & \textbf{0.03} & \textbf{0.97}\\
+ CE-0.05 & 0.989 & 0.902 & $4.97\err{1.31}$ & 0.03 & 0.97\\
+ KL-0.001 & 0.997 & 0.986 & $5.69\err{1.56}$ & 0.06 & 0.93\\
+ KL-0.005 & 0.993 & 0.930 & $5.36\err{1.41}$ & 0.06 & 0.95\\
+ KL-0.01 & 0.986 & 0.854 & $5.13\err{1.32}$ & 0.04 & 0.97\\
\bottomrule
\end{tabular}
\end{table}

\begin{table}[!t]
\centering
\caption{Ablation study on GRPO sampling hyperparameters group size (``\textbf{G}'') and temperature (``\textbf{T}''). Configuration highlighted in \textbf{bold} indicates Pareto-optimal performance.}
\label{tab:ablation_grpo}
\renewcommand{\arraystretch}{1.0}

\begin{tabular}{cccccccc}
\toprule
\textbf{G} & \textbf{T} & \textbf{AUC$\uparrow$} & \textbf{TPR$\uparrow$} & \textbf{PPL$\downarrow$} & \textbf{Seq-rep-3$\downarrow$} & \textbf{MAUVE$\uparrow$}\\
\midrule 
\textbf{8}  & \textbf{0.7} & \textbf{0.995} & \textbf{0.964} & $\mathbf{5.04\err{1.37}}$ & \textbf{0.03} & \textbf{0.97}\\
4  & 0.7 & 0.992 & 0.900 & $5.07\err{1.38}$ & 0.03 & 0.98\\
16 & 0.7 & 0.996 & 0.968 & $5.11\err{1.42}$ & 0.03 & 0.97\\
8  & 0.5 & 0.983 & 0.820 & $4.98\err{1.28}$ & 0.06 & 0.92\\
8  & 1.0 & 0.992 & 0.940 & $6.21\err{2.26}$ & 0.03 & 0.94\\
\bottomrule
\end{tabular}
\end{table}

We conduct an ablation study on Qwen3-4B to investigate how different components of \marktune\ influence the quality-detectability trade-off. Empirically, we find that the regularization term and its coefficient, as well as the GRPO sampling hyperparameters, are the primary factors shaping \marktune’s performance.

\paragraph{Regularization term and coefficient $\lambda$.}Table~\ref{tab:ablation_reg} highlights the effect of the regularization term. Without regularization, the model aggressively amplifies the watermark signal but at a cost to generation quality. Introducing cross-entropy or KL regularization restores perplexity and diversity while maintaining strong detectability, with cross-entropy with $\lambda=0.01$ offering the most balanced performance and thus serving as our default choice in Section~\ref{sec:main_result}~and~\ref{sec:robust}. This shows that \marktune’s gains over vanilla \gaussmark\ do not come from simply increasing watermark strength, but from guiding weight updates in a watermark-sensitive direction while staying close to the underlying high-quality text generation distribution.

\paragraph{GRPO sampling hyperparameters.}As for the impact of GRPO sampling hyperparameters on \marktune, Table~\ref{tab:ablation_grpo} indicates that the method is generally insensitive to the choice of group size. In practice, a group size of 8 provides the best balance between performance and computational cost. The sampling temperature should follow the model-specific settings commonly used for high-quality text generation, as deviations from these values either reduce output diversity or increase perplexity.

\section{Conclusion}

We introduced \marktune, a practical and theoretically grounded on-policy fine-tuning framework for watermarking open-weight language models. By optimizing a dual objective that combines a watermark-signal reward derived from the \gaussmark\ test statistic with a regularization term on generation quality, \marktune\ enables watermark-aware yet quality-preserving exploration in the over-parameterized representation space. Extensive empirical evaluations across models and datasets show that \marktune\ consistently improves the quality-detectability trade-off of vanilla \gaussmark, reaching performance close to non-distortive inference-time watermarking while preserving downstream task performance. The approach is also robust to paraphrasing and fine-tuning attacks and generalizes well to unseen data. These results suggest that \marktune\ offers a flexible and effective framework for embedding reliable, high-quality watermarks into open-weight language models.

\newpage

{
    \bibliographystyle{plain}
    \bibliography{reference_marktune}
}

\newpage

\appendix

\section{Practical Implementation of \marktune} \label{app:alg_details}

\begin{algorithm}[!t]
  \caption{GRPO for \textsc{MarkTune}}
  \label{alg:GRPO-PO}
  \begin{algorithmic}[1]
    \STATE \textbf{Input}: initial policy model $p_{\theta(\xi_\sigma)}$; watermark signal reward $\mathcal{R}_{\mathrm{wm}}(\cdot;\xi_\sigma)$; text dataset $\mathcal{D}$; hyperparameters $\varepsilon, \lambda, T_1, T_2$.
    \STATE \textbf{Initialize}: Set $p_\theta \gets p_{\theta(\xi_\sigma)}$.

  \FOR{step $= 1, \ldots, T_1$}
    \STATE Sample a batch $\mathcal{D}_b \subset \mathcal{D}$.
    \STATE $p_{\theta_{\mathrm{old}}} \gets p_\theta$.
    \FOR{each prompt $x \in \mathcal{D}_b$}
      \STATE Sample $G$ outputs $\{y_j\}_{j=1}^{G} \sim p_{\theta_{\mathrm{old}}}(\cdot \mid x)$.
      \STATE Compute rewards $\{r_j\}_{j=1}^{G}$ using $\mathcal{R}_{\mathrm{wm}}$.
      \STATE Compute $\hat{A}_j$ for response $y_j$ via group-relative advantage Eq.(\ref{eq;compute_A}).
    \ENDFOR
    \FOR{iteration $= 1, \ldots, T_2$}
      \STATE Update $p_\theta$ by maximizing the GRPO objective Eq.(\ref{eq:grpo_obj}).
    \ENDFOR
  \ENDFOR
    \STATE \textbf{Output}: $p_\theta$.
  \end{algorithmic}
\end{algorithm}

To optimize the objective in Eq.(\ref{eq:overall_opt}), we use the GRPO~\cite{shao2024deepseekmath} algorithm. This choice is motivated by the fact that Eq.(\ref{eq:overall_opt}) cannot be directly optimized due to its dependence on sampled outputs $y \sim p_{\theta(\xi_\sigma)}$. GRPO is a critic-free, on-policy policy-gradient method that replaces the learned value baseline in PPO with a group-relative baseline computed from multiple samples per prompt, which makes it well-suited for optimizing our objective.

For each prompt from a batch $x \sim \mathcal{D}_b$, we sample a group of responses $\{y_j\}_{j=1}^G$ from the current policy and compute their watermark signal rewards $\mathbf{r} = \{r_j\}_{j=1}^G$. The advantage for sample $j$ is normalized within the group:
\begin{equation} \label{eq;compute_A}
    \widehat A_j = \big(r_j - \mathrm{mean}(\mathbf{r})\big)\big/\mathrm{std}(\mathbf{r})
\end{equation}
The policy update follows a clipped objective based on the importance ratio $\rho_j = \frac{p_{\theta}(y_j\mid x)}{p_{\theta_{\mathrm{old}}}(y_j\mid x)}$, maximizing $\mathbb{E}\!\left[\min\!\big(\rho_j\widehat A_j,\ \mathrm{clip}(\rho_j,1-\epsilon,1+\epsilon)\widehat A_j\big)\right]$ together with a specified language modeling regularization term. The overall GRPO objective is give by
\begin{equation}\label{eq:grpo_obj}
\begin{aligned}
    \mathcal{J}(\theta) &=
    \mathbb{E}_{x\sim \mathcal{D},\{y_j\}_{j=1}^G \sim p_{\theta}(\cdot\mid x)} \\
    &\quad \frac{1}{G}\sum_{j=1}^{G}\min \left[\frac{p_{\theta}(y_j \mid x)}{p_{\theta_{\mathrm{old}}}(y_j \mid x)}\,\hat{A}_j,\,
    \mathrm{clip}\!\left(\frac{p_{\theta}(y_j \mid x)}{p_{\theta_{\mathrm{old}}}(y_j \mid x)},\,1-\varepsilon,\,1+\varepsilon \right)\hat{A}_j\right]
    -\lambda \mathcal{L}_{\mathrm{reg}}(\theta;x,\{y_j\}_{j=1}^G).
\end{aligned}
\end{equation}

See Algorithm~\ref{alg:GRPO-PO} for the detailed implementation of GRPO.

\section{Theorems and Proofs} \label{app:proofs}

\subsection{Proof of Proposition~\ref{prop:tv_bound}} \label{app:proof_prop1}
\begin{proof}
Given a prompt $x$, we write $p_\theta(\cdot\mid x)$ by $p_\theta$ for simplicity. Then we denote KL divergence induced by \gaussmark\ with $\xi_\sigma \sim \mathcal{N}(0,\sigma^2\mathbb{I}_{d_r})$ as $D(\xi_\sigma)\;:=\;\mathrm{KL}\left(p_{\theta(\xi_\sigma)}\Big\|\,p_\theta\right) \geq 0$. Since $D(0)=0$ and its gradient vanishes at the global minimum $\left(\nabla_{\xi_\sigma} D(\xi_\sigma)\bigm|_{\xi_\sigma=0}=0\right)$, its Taylor expansion
around $\xi_\sigma=0$ begins at second order:
\[
D(\xi_\sigma)
=\frac{1}{2}\,\xi_\sigma^{\!\top}\mathcal{I}(\theta_{\mathrm{wm}})\,\xi_\sigma
+o\bigl(\|\xi_\sigma\|^2\bigr),
\quad
\mathcal{I}(\theta_{\mathrm{wm}})
:=\nabla^2_{\!\xi_\sigma}D(\xi_\sigma)\bigl\lvert_{\xi_\sigma=0},
\]
where $\mathcal{I}(\theta_{\mathrm{wm}}) \in \mathbb{R}^{d_r\times d_r}$ is the Fisher information matrix. Each diagonal entry $\mathcal{I}_{jj}(\theta_{\mathrm{wm}})$ of $\mathcal{I}(\theta_{\mathrm{wm}})$ represents the Fisher information of $j$-th component $\theta_{\mathrm{wm}}^{(j)}$ of $\theta_{\mathrm{wm}}$ and measures how much information a single model response $y$ provides about the specific parameter component $\theta_{\mathrm{wm}}^{(j)}$:
\begin{equation*}
    \mathcal{I}_{jj}(\theta_{\mathrm{wm}}) = \mathbb{E}_{y \sim p_{\theta}(\cdot \mid x)}\left[ \left( \frac{\partial \log p_\theta(y \mid x)}{\partial \theta_{\mathrm{wm}}^{(j)}} \right)^2\right].
\end{equation*}
Then we take the expectation of this approximation with respect to the distribution of $\xi_\sigma$:
\begin{align*}
    \mathbb{E}[D(\xi_\sigma)] &= \mathbb{E}\left[\frac{1}{2}\,\xi_\sigma^{\!\top}\mathcal{I}(\theta_{\mathrm{wm}})\,\xi_\sigma
+o\bigl(\|\xi_\sigma\|^2\bigr)\right] \\
& = \frac{1}{2}\mathbb{E}\left[\mathrm{tr}(\mathcal{I}(\theta_{\mathrm{wm}})\xi_\sigma\xi_\sigma^\top)\right] + o(\sigma^2d_r) \\
& = \frac{1}{2}\mathrm{tr}\left\{\mathcal{I}(\theta_{\mathrm{wm}})\mathbb{E}\left[\xi_\sigma\xi_\sigma^\top\right]\right\} + o(\sigma^2d_r) \\
& = \frac{\sigma^2}{2}\mathrm{tr}(\mathcal{I}(\theta_{\mathrm{wm}})) + o(\sigma^2d_r).
\end{align*}
For a well-defined model, there exists a model-dependent constant capturing the local Lipschitz sensitivity of the map $\theta_{\mathrm{wm}}\mapsto p_{(\theta_{\mathrm{wm}},\theta_0)}(\cdot\mid x)$. In the worst case this constant can scale with a network Lipschitz factor (e.g., products of layer operator norms), which may grow exponentially in depth. In practice this is milder: restricting watermarking to later layers reduces the effective sensitivity, and empirical results from~\cite{2025gaussmark} demonstrate that the scaling can be much more moderate. Therefore, it is natural to make an assumption that there exists a model-dependent constant $C(p_\theta)>0$ such that
\begin{equation*}
    \mathcal{I}_{jj}(\theta_{\mathrm{wm}}) \leq C(p_\theta).
\end{equation*}
Then we have $\mathbb{E}[D(\xi_\sigma)] \leq \frac{C(p_\theta)\sigma^2d_r}{2}$. Writing it formally gives us
\begin{equation*}
    \mathbb E_{\xi_\sigma}\left[\mathrm{D_{KL}}\!\left(
            p_{\theta(\xi_\sigma)}(\cdot \mid x)
            \,\Big\|\,
            p_{\theta}(\cdot \mid x)
        \right)\right] \le C \, \sigma^2 d_r,
\end{equation*}
where $C > 0$ depends only on the local curvature of this KL map around~$\theta$.
\end{proof}

\subsection{Proof of Proposition~\ref{prop:FPR_control}} \label{app:proof_2}
\begin{proof}
    Under the null hypothesis $\hnull$, for any $y \in \mathcal{Y}$,
    the key and the generated text $y$ are independent of each other, i.e.,
    $(\xi_\sigma, y) \sim \mathcal{N}(0,\sigma^2 \mathbb{I}_{d_r})\otimes q$ for some $q \in \Delta(\mathcal{Y})$. Therefore, the level of the test is given by
    \[
    \begin{aligned}
    \Pr_{\hnull}\!\big(\psi(y;\xi_\sigma)=1\big)
    &= \mathbb{E}_{\xi_\sigma\sim\mathcal{N}(0,\sigma^2 \mathbb{I}_{d_r}),\,y\sim q}
    \!\left[
    \mathbb{I}\!\left\{
    \frac{\langle \xi_\sigma, \nabla_{\theta_{\mathrm{wm}}}\log p_\theta(y)\rangle}{\sigma \|\nabla_{\theta_{\mathrm{wm}}}\log p_\theta(y)\|_2}
    \geq \tau_\alpha
    \right\}
    \right] \\
    &= \mathbb{E}_{y\sim q}\!\left[
    \Pr_{\xi_\sigma}\!\left(
    \frac{\langle \xi_\sigma, \nabla_{\theta_{\mathrm{wm}}}\log p_\theta(y)\rangle}{\sigma \|\nabla_{\theta_{\mathrm{wm}}}\log p_\theta(y)\|_2}
    \geq \tau_\alpha
    \right)\right] \\
    &= \mathbb{E}_{y\sim q}\!\left[
    \Pr_{\xi_\sigma}\!\big(\psi(y;\xi_\sigma)\ge \tau_\alpha\big)
    \right] \\
    &= 1-\Phi(\tau_\alpha)\;=\;\alpha,
    \end{aligned}
    \]
    where the last line is based on
    \[
    \psi(y;\xi_\sigma)
    = \frac{\langle \xi_\sigma, \nabla_{\theta_{\mathrm{wm}}}\log p_\theta(y)\rangle}{\sigma \|\nabla_{\theta_{\mathrm{wm}}}\log p_\theta(y)\|_2}
    \sim \mathcal{N}(0,1)
    \]
    for any vector $\nabla_{\theta_{\mathrm{wm}}}\log p_\theta(y)$.
    The last equality is derived by plugging in $\tau_\alpha=\Phi^{-1}(1-\alpha)$.
\end{proof}

\section{A Stylized Linear-Softmax Analysis of \textsc{MarkTune}}
\label{app:marktune_analysis}

In this section we provide a stylized analysis of how \marktune\ can improve the quality-detectability trade-off. The goal is not to reproduce all realistic training dynamics, but rather to study the local geometry of the objective optimized in the main text, within an idealized linear-softmax model.

\paragraph{Model and notation.}
Let $\theta\in\mathbb{R}^d$ denote the full set of model weights.
For a fixed watermark key $\xi$, a weight-editing scheme such as \gaussmark\
produces an initial watermarked weight $\theta(\xi)$ (Here we omit $\xi$'s dependence on $\sigma$ for simplicity); \marktune\ then fine-tunes $\theta(\xi)$ by optimizing a reward-regularized objective.

For the analysis, we adopt a standard linear-softmax setting:
\begin{equation}
\label{eq:lin-softmax-full}
p_\theta(y\mid x)
\ \propto\
\exp\!\big\{\langle \theta,\,\Phi(x,y)\rangle\big\},
\end{equation}
where $\Phi(x,y)\in\mathbb{R}^d$ is a fixed feature map that encodes the prompt-response pair.

Let $p^\star(\cdot\mid x)$ denote a high-quality target conditional
distribution (e.g., the population language distribution or an oracle model).
We assume that $p^\star$ is realized within the linear-softmax family by a
weight $\theta^\star$:
\[
  p^\star(y\mid x)
  \ =\
  p_{\theta^\star}(y\mid x)
  \ \propto\
  \exp\!\big\{\langle \theta^\star,\,\Phi(x,y)\rangle\big\}.
\]
The population cross entropy risk of a weight $\theta$ relative to $p^\star$ is
\begin{equation}
\label{eq:CE-risk-theta}
\mathcal L(\theta)
\ :=\
\mathbb E_{x}\,\mathbb E_{y\sim p^\star(\cdot\mid x)}
\big[-\log p_{\theta}(y\mid x)\big].
\end{equation}
As in the main text, the watermark reward is induced by the \gaussmark\
test statistic (For stylized analysis we include the prompt $x$ in the test statistic).
Let $q'_\theta$ be the fixed reference model used for detection, and define
\begin{equation}
\label{eq:h-def}
h(x,y;\xi)
\ :=\
\Big\langle \xi,\ \nabla_{\theta_{\mathrm{wm}}}\log q'_\theta(y\mid x)\Big\rangle,
\end{equation}
where $\theta_{\mathrm{wm}}$ denotes the watermarked block of the reference
model parameters.
The unnormalized watermark reward of $\theta$ is then
\begin{equation}
\label{eq:reward-def-theta}
\mathcal R(\theta)
\ :=\
\mathbb E_{x}\,\mathbb E_{y\sim p_{\theta}(\cdot\mid x)}
\big[h(x,y;\xi)\big].
\end{equation}

The stylized \marktune\ objective in this setting is
\[
  J(\theta)
  \;=\;
  \mathcal R(\theta) \;-\; \lambda\,\mathcal L(\theta),
\]
where $\lambda>0$ is a regularization coefficient controlling the trade-off
between watermark reward and regularization term. For the sake of analysis, we assume that the \gaussmark-soft-activated weight $\theta(\xi)$ lies in a neighborhood of $\theta^\star$, and study the local geometry of $J$ around the high-quality target weight $\theta^\star$.

\paragraph{Goal.}
We want to show that, in a neighborhood of $\theta^\star$, there exists a perturbation
$\delta\theta$ in the full parameter space such that the adjusted weight
\[
  \theta_{\marktune}
  \ :=\
  \theta^\star + \delta\theta
\]
satisfies, for a small scalar $\kappa = 1/\lambda > 0$:
\begin{itemize}
\item[(i)] the cross entropy relative to $p^\star$ changes only at
\emph{second order},
\(
  \mathcal L(\theta_{\marktune})-\mathcal L(\theta^\star)
  = \Theta(\kappa^2),
\)
\item[(ii)] the watermark reward increases at \emph{first order},
\(
  \mathcal R(\theta_{\marktune})-\mathcal R(\theta^\star)
  = \Theta(\kappa),
\)
\end{itemize}
This implies that, under $p_{\theta_{\marktune}}$ the (normalized) detection
statistic exhibits a mean shift of order $\kappa$, improving detection power at fixed false positive rate, while the degradation in generation quality is quadratically smaller.

\paragraph{Assumptions.}
We adopt the following standard local assumptions for linear soft-max
models.

\begin{enumerate}
\item[(A1)] \textit{Smoothness.}
The map $\theta\mapsto p_\theta$ is smooth in a neighborhood of $\theta^\star$.

\item[(A2)] \textit{Local quadratic expansion of $\mathcal L$.}
Around $\theta^\star$, the cross-entropy admits the second-order expansion
\[
\mathcal L(\theta^\star+\delta)
\ =\
\mathcal L(\theta^\star)
\ +\ \frac12\,\delta^\top F\,\delta
\ +\ o\big(\|\delta\|^2\big),
\]
where $F$ is the Fisher information matrix at $\theta^\star$:
\[
F
\ :=\
\mathbb E_{x}\,
\mathrm{Cov}_{y\sim p^\star(\cdot\mid x)}\!\big[\Phi(x,y)\big]
\ \in\ \mathbb{R}^{d\times d},
\]
assumed positive definite on the relevant subspace.

\item[(A3)] \textit{Local first-order expansion of $\mathcal R$.}
For small perturbations $\delta$, the reward admits the first-order
expansion
\[
\mathcal R(\theta^\star+\delta)
\ =\
\mathcal R(\theta^\star)
\ +\
g^\top \delta
\ +\
o\big(\|\delta\|\big),
\]
for some gradient $g\in\mathbb{R}^d$ characterized in
Lemma~\ref{lem:reward-gradient-theta} below under linear softmax setting.
\end{enumerate}

Assumptions (A2)-(A3) are the usual Fisher-geometry expansions of cross entropy
and reward around a population optimum in linear soft-max models.

\paragraph{Local dual optimization problem.}
Based on the assumptions, near $\theta^\star$, the \marktune\ objective can be approximated by
\[
J(\theta^\star+\delta\theta)
\ \approx\
\mathcal R(\theta^\star) + g^\top\delta\theta
\ -\
\lambda\Big(
  \mathcal L(\theta^\star) + \tfrac12\,\delta\theta^\top F\,\delta\theta
\Big).
\]
The local problem becomes
\begin{equation}
\label{eq:local-prob-theta}
\max_{\delta\theta\in\mathbb{R}^d}\ \
g^\top \delta\theta
\;-\;
\frac{\lambda}{2}\,\delta\theta^\top F\,\delta\theta := J_{\text{loc}}(\delta\theta).
\end{equation}
This is a strictly concave quadratic program in parameter space with a unique
optimizer.

\begin{lemma}[Closed-form optimizer in parameter space]
\label{lem:optimizer-theta}
The unique maximizer of \eqref{eq:local-prob-theta} is
\[
\delta\theta^\dagger\ =\ \frac{1}{\lambda}\,F^{-1}\,g,
\]
and the optimal objective value equals
$\frac{1}{2\lambda}\,g^\top F^{-1}g$.
\end{lemma}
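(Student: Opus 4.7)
The plan is to recognize \eqref{eq:local-prob-theta} as an unconstrained strictly concave quadratic program and solve it via the standard first-order optimality condition. Specifically, assumption (A2) guarantees that the quadratic form $F$ is positive definite on the relevant subspace, so the Hessian $\nabla^2_{\delta\theta}J_{\mathrm{loc}} = -\lambda F$ is negative (semi)definite; strict concavity then ensures that any stationary point is automatically the unique global maximizer, so it suffices to exhibit one.

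First I would compute the gradient
\[
\nabla_{\delta\theta}\,J_{\mathrm{loc}}(\delta\theta)
\;=\;
g \;-\; \lambda\,F\,\delta\theta,
\]
and set it to zero, which immediately yields $\delta\theta^\dagger = \lambda^{-1}F^{-1}g$, well-defined because $F$ is invertible on the subspace containing $g$. Next I would substitute this stationary point back into $J_{\mathrm{loc}}$: the linear term evaluates to $\lambda^{-1}\,g^\top F^{-1}g$, while the quadratic penalty evaluates to
\[
-\tfrac{\lambda}{2}\,\bigl(\lambda^{-1}F^{-1}g\bigr)^{\!\top}\!F\bigl(\lambda^{-1}F^{-1}g\bigr)
\;=\;
-\tfrac{1}{2\lambda}\,g^\top F^{-1}g,
\]
and summing gives the claimed optimal value $\tfrac{1}{2\lambda}\,g^\top F^{-1}g$.

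There is no substantive obstacle here; this is the textbook closed form for a positive-definite quadratic. The only subtlety worth flagging is the qualifier ``on the relevant subspace'' in (A2): if $F$ is only positive definite on a proper subspace, one must verify that $g$ lies in the range of $F$ so that $F^{-1}g$ is meaningful. In the linear-softmax setting this holds because both $g$ (via Lemma \ref{lem:reward-gradient-theta}) and $F$ are defined through Fisher-type expectations against the same feature map $\Phi$, so they share the same active subspace; alternatively, one may simply replace $F^{-1}$ with the Moore--Penrose pseudoinverse without changing the conclusion.
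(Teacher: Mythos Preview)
Your proposal is correct and mirrors the paper's own proof essentially line for line: compute the gradient $g-\lambda F\,\delta\theta$, set it to zero to obtain $\delta\theta^\dagger=\lambda^{-1}F^{-1}g$, invoke positive definiteness of $F$ for strict concavity and uniqueness, and substitute back for the optimal value $\tfrac{1}{2\lambda}g^\top F^{-1}g$. Your additional remark about the ``relevant subspace'' caveat and the pseudoinverse fallback is a helpful clarification that the paper omits, but it does not change the argument.
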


\begin{proof}
The objective is
\(
J_{\text{loc}}(\delta\theta)=g^\top\delta\theta
-\frac{\lambda}{2}\delta\theta^\top F\,\delta\theta.
\)
Differentiating and setting the gradient to zero gives the first-order
condition
\[
g - \lambda F\,\delta\theta = 0
\quad\Rightarrow\quad
\delta\theta^\dagger = \lambda^{-1} F^{-1} g.
\]
Positive definiteness of $F$ implies strict concavity and uniqueness.
Substituting $\delta\theta^\dagger$ back into $J_{\text{loc}}$ yields
$J_{\text{loc}}(\delta\theta^\dagger)=\frac{1}{2\lambda}g^\top F^{-1}g$.
\end{proof}

\begin{lemma}[Reward gradient in the linear-softmax model]
\label{lem:reward-gradient-theta}
Let
\[
\mathcal R(\theta)
=
\mathbb E_{x}\,\mathbb E_{y\sim p_{\theta}(\cdot\mid x)}
\big[h(x,y;\xi)\big],
\]
with $h(x,y;\xi)$ as in \eqref{eq:h-def}.
Then the gradient of $\mathcal R$ at $\theta^\star$ is
\[
g
\ :=\
\nabla_\theta \mathcal R(\theta)\big|_{\theta=\theta^\star}
\ =\
\mathbb E_{x}\,\mathbb E_{y\sim p^\star(\cdot\mid x)}
\big[h(x,y;\xi)\,s(x,y) \big],
\]
where
\[
s(x,y)
\ :=\
\nabla_\theta \log p_\theta(y\mid x)\big|_{\theta=\theta^\star}
\ =\
\Phi(x,y) - \mu(x),
\qquad
\mu(x) := \mathbb E_{y\sim p^\star(\cdot\mid x)}[\Phi(x,y)].
\]
Equivalently,
\[
g
\ =\
\mathbb E_{x}\,
\mathrm{Cov}_{y\sim p^\star(\cdot\mid x)}
\big[h(x,y;\xi),\ \Phi(x,y)\big],
\]
this implies that the reward gradient picks out directions along which the watermark statistic co-varies with the feature representation under the target distribution.
\end{lemma}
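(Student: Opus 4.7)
The plan is to combine the log-derivative (REINFORCE) identity with the explicit form of the score function in the linear-softmax family, and then recognize the resulting expression as a covariance by using the fact that the score function has zero mean under $p^\star$.

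First I would differentiate $\mathcal R(\theta)$ through the sampling distribution $p_\theta(\cdot\mid x)$ using the standard log-derivative trick. The key point is that $h(x,y;\xi)$ depends only on the fixed reference model $q'_\theta$ (used in detection) and on the fixed key $\xi$; it does not depend on the sampling parameter $\theta$ that we are differentiating with respect to. Hence, under the smoothness assumption (A1) which justifies exchanging differentiation and integration,
\[
\nabla_\theta \mathcal R(\theta)
\ =\
\mathbb E_{x}\,\mathbb E_{y\sim p_\theta(\cdot\mid x)}
\bigl[h(x,y;\xi)\,\nabla_\theta \log p_\theta(y\mid x)\bigr].
\]

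Second, I would compute the score function in the linear-softmax family. From $p_\theta(y\mid x)\propto \exp\langle\theta,\Phi(x,y)\rangle$, taking logarithms and subtracting the log-partition function yields
\[
\nabla_\theta \log p_\theta(y\mid x)
\ =\
\Phi(x,y)\ -\ \mathbb E_{y'\sim p_\theta(\cdot\mid x)}[\Phi(x,y')].
\]
Evaluating this expression at $\theta=\theta^\star$ and using the identification $p_{\theta^\star}=p^\star$ gives exactly $s(x,y)=\Phi(x,y)-\mu(x)$, as claimed.

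Third, I would substitute these two pieces into the expression for $g$. This yields the stated formula $g=\mathbb E_x\mathbb E_{y\sim p^\star}[h(x,y;\xi)\,s(x,y)]$. To obtain the covariance form, I would note that $\mathbb E_{y\sim p^\star(\cdot\mid x)}[s(x,y)]=0$ (the score has zero mean under its own distribution); therefore subtracting $\mathbb E[h(x,y;\xi)]\cdot \mathbb E[s(x,y)]=0$ inside the inner expectation does not change the value and rewrites it as the conditional covariance of $h(x,y;\xi)$ with $\Phi(x,y)$ under $p^\star(\cdot\mid x)$.

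The main obstacle is essentially bookkeeping rather than substance: verifying that differentiation and expectation can be swapped (which follows from smoothness in (A1) together with integrability of the score, standard for exponential-family-type models) and tracking that $h(x,y;\xi)$ genuinely has no residual $\theta$-dependence so that no extra term appears. Once these regularity points are handled, both equalities follow immediately from the REINFORCE identity and the exponential-family score formula.
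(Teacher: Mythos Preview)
Your proposal is correct and follows essentially the same argument as the paper: both apply the log-derivative (REINFORCE) identity to pass the gradient through the sampling distribution, then use the explicit linear-softmax score $\Phi(x,y)-\mu(x)$, and finally rewrite the result as a covariance (the paper phrases the first step via a directional derivative along a path $\theta(t)=\theta^\star+t\,\delta\theta$, but this is the same identity). Your explicit justification of the covariance form via the zero-mean score property is slightly more detailed than the paper's, which simply says it ``follows by expanding $s(x,y)=\Phi(x,y)-\mu(x)$.''
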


\begin{proof}
By definition,
\[
\mathcal R(\theta)
=
\mathbb E_{x}\,\mathbb E_{y\sim p_{\theta}(\cdot\mid x)}
[h(x,y;\xi)],
\]
and $h$ does not depend on $\theta$.
Along a path $\theta(t)=\theta^\star+t\,\delta\theta$ we have
\[
\frac{d}{dt}\Big|_{t=0}\ \mathbb E_{y\sim p_{\theta(t)}(\cdot\mid x)}[h(x,y;\xi)]
\;=\;
\mathbb E_{y\sim p^\star(\cdot\mid x)}\big[h(x,y;\xi)\,s(x,y)^\top\delta\theta\big],
\]
where
$s(x,y):=\nabla_\theta\log p_\theta(y\mid x)\big|_{\theta=\theta^\star}$.
In the linear-softmax model \eqref{eq:lin-softmax-full},
\[
\log p_\theta(y\mid x)
=
\langle \theta,\Phi(x,y)\rangle
-
\log\!\sum_{\tilde y}\exp\!\big\{\langle \theta,\Phi(x,\tilde y)\rangle\big\},
\]
so
\[
s(x,y)
=
\Phi(x,y)
-
\mathbb E_{\tilde y\sim p^\star(\cdot\mid x)}[\Phi(x,\tilde y)]
=
\Phi(x,y)-\mu(x).
\]
Taking expectation over $x$ and reading off the coefficient of $\delta\theta$
gives the claimed gradient expression
\(
g=\mathbb E_x\mathbb E_{y\sim p^\star}[h(x,y;\xi)s(x,y)]
\).
The covariance form follows by expanding $s(x,y)=\Phi(x,y)-\mu(x)$.
\end{proof}

\begin{proposition}[Second-order CE cost and first-order reward gain]
\label{prop:quality-reward-theta}
Let $\delta\theta^\dagger=\lambda^{-1}F^{-1}g$ be the optimizer of the
local problem \eqref{eq:local-prob-theta}, with $F$ and $g$ as in
(A2) and Lemma~\ref{lem:reward-gradient-theta}.
Define the \marktune-refined weight
\(
\theta_{\marktune}=\theta^\star+\delta\theta^\dagger
\)
and set $\kappa:=\lambda^{-1}$.
Then, as $\kappa\to 0$,
\[
\mathcal L(\theta_{\marktune})-\mathcal L(\theta^\star)
\;=\;
\frac{\kappa^2}{2}\,g^\top F^{-1}g
\ +\ o(\kappa^2),
\]
and
\[
\mathcal R(\theta_{\marktune})-\mathcal R(\theta^\star)
\;=\;
\kappa\,g^\top F^{-1}g
\ +\ o(\kappa).
\]
Consequently, along these directions in the full parameter space the normalized
detection statistic experiences a mean shift linear in $\kappa$, while the
cross entropy deviation from $p^\star$ grows only quadratically in $\kappa$.
\end{proposition}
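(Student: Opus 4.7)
The plan is to treat this as a direct substitution into the two local expansions guaranteed by assumptions (A2) and (A3), since all the real work (computing $\delta\theta^\dagger$, characterizing $g$ via the covariance form) has already been done in Lemma~\ref{lem:optimizer-theta} and Lemma~\ref{lem:reward-gradient-theta}. The only genuine task is to verify that the remainder terms in the two expansions, when evaluated along the specific perturbation $\delta\theta^\dagger = \kappa F^{-1} g$, collapse to the claimed orders in $\kappa$.

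First I would record the norm of the perturbation: since $g$ and $F$ are fixed (depending on $\theta^\star$, $\Phi$, and $\xi$, all of which are independent of $\kappa$), we have $\|\delta\theta^\dagger\|_2 = \kappa \|F^{-1} g\|_2 = \Theta(\kappa)$, and in particular $\|\delta\theta^\dagger\|_2^2 = \Theta(\kappa^2)$. This calibration is what lets us translate between $o(\|\delta\|^2)$ and $o(\kappa^2)$, respectively $o(\|\delta\|)$ and $o(\kappa)$, in the next two steps.

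For the cross-entropy part, I would apply (A2) at $\delta = \delta\theta^\dagger$. The quadratic term is
\[
\tfrac{1}{2}(\delta\theta^\dagger)^\top F\,(\delta\theta^\dagger)
= \tfrac{1}{2}\,\kappa^2\, g^\top F^{-1} F\, F^{-1} g
= \tfrac{\kappa^2}{2}\, g^\top F^{-1} g,
\]
using that $F$ is symmetric and invertible on the relevant subspace. The remainder $o(\|\delta\theta^\dagger\|^2)$ is $o(\kappa^2)$ by the norm bound above, giving the first claim. For the reward part, (A3) with the same perturbation yields
\[
\mathcal R(\theta_{\marktune}) - \mathcal R(\theta^\star)
= g^\top (\kappa F^{-1} g) + o(\|\delta\theta^\dagger\|)
= \kappa\, g^\top F^{-1} g + o(\kappa),
\]
which is the second claim. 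Positive-definiteness of $F$ on the relevant subspace guarantees $g^\top F^{-1} g \ge 0$, and strict positivity whenever $g \ne 0$ so that both asymptotic statements are nontrivial.

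The only potential obstacle is a bookkeeping one: verifying that the little-$o$ remainders in (A2) and (A3) are uniform enough in the direction $F^{-1} g / \|F^{-1} g\|$ for the translation from $o(\|\delta\|^k)$ to $o(\kappa^k)$ to be legitimate. This is automatic once (A2) and (A3) are read as pointwise local expansions along the fixed ray $\{\kappa F^{-1} g : \kappa > 0\}$, so no further regularity beyond (A1)-(A3) is required. Finally, the corollary about the detection statistic experiencing a linear-in-$\kappa$ mean shift follows by plugging $p_{\theta_{\marktune}}$ into the definition of $\psi$ in~\eqref{eq:test_sta}, expanding $\mathbb{E}_{y \sim p_{\theta_{\marktune}}}[\psi(y,\xi_\sigma \mid x)]$ to first order in $\delta\theta^\dagger$, and reading off the coefficient $\kappa$ from the same $g^\top F^{-1} g$ quantity; I would state this as an immediate consequence rather than prove it in detail.
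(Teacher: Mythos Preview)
Your proposal is correct and follows essentially the same route as the paper: substitute $\delta\theta^\dagger=\kappa F^{-1}g$ into the expansions (A2) and (A3), simplify the quadratic and linear terms to $\tfrac{\kappa^2}{2}g^\top F^{-1}g$ and $\kappa\,g^\top F^{-1}g$ respectively, and absorb the remainders using $\|\delta\theta^\dagger\|=\Theta(\kappa)$. Your additional remarks on norm calibration and remainder uniformity are sound but not needed beyond what the paper already assumes.
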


\begin{proof}
By the quadratic expansion (A2),
\[
\mathcal L(\theta^\star+\delta)-\mathcal L(\theta^\star)
=
\frac12\,\delta^\top F\,\delta
+o\big(\|\delta\|^2\big).
\]
Plugging in $\delta=\delta\theta^\dagger=\lambda^{-1}F^{-1}g$ gives
\[
\frac12\,(\lambda^{-1}F^{-1}g)^\top F\,(\lambda^{-1}F^{-1}g)
=
\frac{1}{2\lambda^2}\,g^\top F^{-1}g
=
\frac{\kappa^2}{2}\,g^\top F^{-1}g,
\]
plus $o(\kappa^2)$.

For the reward, the first-order expansion (A3) with
$\delta=\delta\theta^\dagger$ gives
\[
\mathcal R(\theta^\star+\delta)-\mathcal R(\theta^\star)
=
g^\top\delta + o\big(\|\delta\|\big)
=
\lambda^{-1}\,g^\top F^{-1}g
+ o(\lambda^{-1})
=
\kappa\,g^\top F^{-1}g + o(\kappa).
\]
This proves the claim.
\end{proof}

\paragraph{Implications for the improved trade-off.}
Let $\psi(y;\xi)$ denote the (normalized) \gaussmark\ test statistic computed
with the fixed reference model $p_\theta$.
Under $\hnull$ (unwatermarked text), $\mathbb E[\psi]=0$ and the statistic
has variance calibrated to yield a level-$\alpha$ test
(Proposition~\ref{prop:FPR_control} in the main text).
Under the \marktune-trained weight $\theta_{\marktune}$, the reward
increment
$\mathcal R(\theta_{\marktune})-\mathcal R(\theta^\star)$ corresponds to a
mean shift of order $\kappa\,C$, where $C=g^\top F^{-1}g>0$, while the
variance of $\psi$ changes only at higher order under standard smoothness assumptions on the statistic, so to first order we may regard it as unchanged.
Consequently, for any fixed false positive rate, the true positive rate
increases at \emph{first order} in $\kappa$, whereas the cross entropy
distance to $p^\star$ grows only at \emph{second order}.

\paragraph{Soundness of the stylized model and local assumptions.}
The derivation assumes that the local Fisher geometry at $\theta^\star$ is
well-behaved (so that $F$ is positive definite on the relevant subspace) and
that the linear-softmax model \eqref{eq:lin-softmax-full} adequately captures
the first-order behavior of the true language model.
In highly over-parameterized language model with wide, flat basins, such local
expressivity is often a reasonable approximation~\cite{2025gaussmark}.
From a geometric viewpoint, $\delta\theta^\dagger$ can be seen as the Riesz
representer of the linear functional $g$ under the Fisher metric $F$; any
parameterization that is locally surjective in the neighborhood of
$\theta^\star$ admits such a direction.

\section{Implementation Details} \label{app:imp_details}
In this section, we provide details on the implementation of metrics, baselines, our methods, and the fine-tuning attack. We run the experiments on NVIDIA H200 GPUs.

\subsection{Metrics.} For the MAUVE metric, we select samples from the \textbf{RealNewsLike} split of \textbf{C4} with text longer than 400 tokens, using the first 200 tokens as the model prompt and the next 200 tokens as the human-generated reference. For the \textbf{ELI5} dataset, we select samples whose answers exceed 200 tokens and truncate each answer to 200 tokens to serve as the human-generated reference.

In Figures~\ref{fig:trade-off}~and~\ref{fig:ft-attack}, dashed lines correspond to a bootstrap estimate (500 resamples) of the mean TPR@FPR=1\%, and the shaded regions correspond to the 90\% confidence interval on the mean estimate.

\subsection{Watermarking methods.} 

\begin{table}[!t]
  \centering
  \caption{Hyperparameter setting for \marktune\ on Qwen3-4B.}
  \begin{tabular}{lc}
    \toprule
    Model & Qwen3-4B on OpenWebText \\
    \midrule
    Training steps & 200 \\
    GRPO inner steps & 3 \\
    Regularization term & Cross-entropy(CE) regularization \\
    Regularization coefficient $\lambda$ & 0.01 \\
    Optimizer & AdamW \\
    maximal learning rate & 5e-6 \\
    Learning rate schedule & Cosine decay with 20 steps linear warmup \\
    Learning rate decay ratio (training steps) & 40\% \\
    Momentum $(\beta_1, \beta_2)$ & (0.9, 0.97) \\
    Group size & 8 \\
    Prompt batch size & 32 \\
    Mini batch size & 32 \\
    Micro batch size & 32 \\
    Max sampling sequence length & 256 tokens \\
    Sampling temperature & 0.7 \\
    CE batch size & 64 \\
    CE sequence length & 512 tokens \\
    Precision & bfloat16  \\
    Data-parallel size & 4 \\
    \bottomrule
  \end{tabular}
  \label{tab:qwen_setting}
\end{table}

\begin{table}[!t]
  \centering
  \caption{Hyperparameter setting for \marktune\ on Llama2-7B.}
  \begin{tabular}{lc}
    \toprule
    Model & Llama2-7B on OpenWebText \\
    \midrule
    Training steps & 300 \\
    GRPO inner steps & 3 \\
    Regularization term & Cross-entropy(CE) regularization \\
    Regularization coefficient $\lambda$ & 0.005 \\
    Optimizer & AdamW \\
    maximal learning rate & 4e-6 \\
    Learning rate schedule & Cosine decay with 30 steps linear warmup \\
    Learning rate decay ratio (training steps) & 40\% \\
    Momentum $(\beta_1, \beta_2)$ & (0.9, 0.97) \\
    Group size & 8 \\
    Prompt batch size & 32 \\
    Mini batch size & 32 \\
    Micro batch size & 32 \\
    Max sampling sequence length & 256 tokens \\
    Sampling temperature & 1.0 \\
    CE batch size & 64 \\
    CE sequence length & 512 tokens \\
    Precision & bfloat16  \\
    Data-parallel size & 4 \\
    \bottomrule
  \end{tabular}
  \label{tab:llama_setting}
\end{table}

\paragraph{Baselines.} We use the inference-time KGW watermark~\cite{2023KGW}, which biases the output logits toward a PRF-determined ``green list'' based on the token prefix. We set the token context length to $k=4$, the green-list fraction to $\gamma = 0.5$, the bias strength to $\delta = 1$ for KGW-1, and $\delta = 2$ for KGW-2. For Figure~\ref{fig:trade-off}, we sweep the bias strength $\delta$ over the set $\{0.6, 0.7, 0.8, 0.9, 1.0, 1.2, 1.5\}$. 

For the non-distortive Gumbel-max watermark~\cite{2023Aaronson} and the SynthID watermark~\cite{2024scalable}, we also use a 4-token prefix for fair comparison. For SynthID, we set the number of tournaments to 30, following the recommended default setting.

For KGW-D~\cite{2024gu}, we distill the logit-based KGW with $\gamma = 0.25$ and $\delta = 1$ to preserve generation quality. Note that we distill the 1-token-prefix KGW variant, since watermarks with longer token prefixes are extremely difficult for the model to learn.

For \gaussmark~\cite{2025gaussmark}, given a vectorized target parameter $\theta_{\mathrm{wm}}$ with dimension $d_r$, we first compute its RMS norm $\|\theta_{\mathrm{wm}}\|_{\mathrm{RMS}} = \|\theta_{\mathrm{wm}}\|_{\mathrm{F}} / \sqrt{d_r}$. The watermark key is then sampled as $\xi_\sigma \sim \mathcal{N}(0,\, \sigma \|\theta_{\mathrm{wm}}\|_{\mathrm{RMS}}\, \mathbb{I}_{d_r})$. This gives $\mathbb{E}\,\|\xi_\sigma\|_{\mathrm{F}}^{2} \approx \sigma^{2} \|\theta_{\mathrm{wm}}\|_{\mathrm{RMS}}^{2} d_r$, and thus $\mathbb{E}[\|\xi_\sigma\|_{\mathrm{F}} / \|\theta_{\mathrm{wm}}\|_{\mathrm{F}}] \approx \sigma$. This normalization ensures that the relative perturbation magnitude is directly controlled by $\sigma$, making watermark strength easy to tune across different parameter scales and model architectures. We perform a grid search over candidate target parameters and select the up-projection MLP matrix in layer 28 for Qwen3-4B with $\sigma = 1.0$, and the up-projection MLP matrix in layer 30 for Llama2-7B with $\sigma = 1.2$. For Figure~\ref{fig:trade-off}, we sweep the perturbation strength $\sigma$ over the set $\{0.6, 0.8, 1.0, 1.1, 1.2, 1.5, 1.8\}$ for Qwen3-4B and $\{0.8, 1.0, 1.2, 1.3, 1.4, 1.6, 2.0\}$ for Llama2-7B.

\paragraph{Our method.} For \marktune, we inherit the target parameter from \gaussmark\ and use $\sigma = 0.6$ for Qwen3-4B and $\sigma = 0.8$ for Llama2-7B to apply a soft activation. For the KL regularization, we use the Qwen3-8B and Llama2-13B as the oracle models, respectively. The GRPO-related hyperparameters are listed in Table~\ref{tab:qwen_setting} and Table~\ref{tab:llama_setting}. For Figure~\ref{fig:trade-off}, we sweep the number of training steps over the set $\{50, 100, 150, 200, 250, 300, 500\}$ for Qwen3-4B and $\{50, 100, 200, 300, 400, 500, 600\}$ for Llama2-7B.

\subsection{Fine-tuning attack.}
We fine-tune all models on \textbf{OpenWebText} for 1500 steps using the AdamW optimizer with a maximal learning rate of $1\times 10^{-5}$ and a cosine decay schedule with 300 warmup steps. We use a sequence length of 512 and a batch size of 64. For LoRA, we set the rank to 8 and the alpha to 16. LoRA is applied to all MLP projection layers as well as to the unembedding layer.

\section{Supplemental Results on Robustness} \label{app:supp}

\begin{table}[!t]
\centering
\small
\caption{Extension of Table~\ref{tab:robust} with perplexity (PPL) to validate attack plausibility.}
\renewcommand{\arraystretch}{1.2}

\resizebox{\textwidth}{!}{
\begin{tabular}{l|l|ccccccc}
\toprule
\textbf{Attack} & \textbf{Metric} & \textbf{KGW-1} & \textbf{KGW-2} & \textbf{Gumbel-max} & \textbf{SynthID} & \textbf{KGW-D} & \textbf{GaussMark} & \textbf{MarkTune} \\
\midrule
\multicolumn{9}{c}{\cellcolor{blkC}\textbf{Qwen3-4B, T=0.7}} \\
\midrule
\multirow{3}{*}{\textbf{Del-0.2}} 
  & \textbf{AUC$\uparrow$} & 0.881 & 0.975 & 0.993 & 0.987 & 0.947 & 0.908 & 0.921 \\
  & \textbf{TPR$\uparrow$} & 0.330 & 0.746 & 0.948 & 0.890 & 0.582 & 0.366 & 0.484 \\
  & \textbf{PPL$\downarrow$} & 20.50 & 25.19 & 18.82 & 19.15 & 21.23 & 19.49 & 19.39 \\
\midrule
\multirow{3}{*}{\textbf{Del-0.5}} 
  & \textbf{AUC$\uparrow$} & 0.617 & 0.697 & 0.778 & 0.780 & 0.851 & 0.622 & 0.744 \\
  & \textbf{TPR$\uparrow$} & 0.040 & 0.082 & 0.162 & 0.094 & 0.196 & 0.048 & 0.068 \\
  & \textbf{PPL$\downarrow$} & 98.53 & 117.25 & 94.82 & 94.19 & 99.95 & 96.82 & 94.79 \\
\midrule
\multirow{3}{*}{\textbf{Sub-0.2}} 
  & \textbf{AUC$\uparrow$} & 0.873 & 0.986 & 0.993 & 0.986 & 0.939 & 0.919 & 0.948 \\
  & \textbf{TPR$\uparrow$} & 0.324 & 0.826 & 0.922 & 0.846 & 0.578 & 0.384 & 0.552 \\
  & \textbf{PPL$\downarrow$} & 24.71 & 28.64 & 21.46 & 23.06 & 25.44 & 23.42 & 23.17 \\
\midrule
\multirow{3}{*}{\textbf{Sub-0.5}} 
  & \textbf{AUC$\uparrow$} & 0.638 & 0.747 & 0.840 & 0.834 & 0.837 & 0.775 & 0.783 \\
  & \textbf{TPR$\uparrow$} & 0.048 & 0.106 & 0.348 & 0.182 & 0.190 & 0.100 & 0.110 \\
  & \textbf{PPL$\downarrow$} & 71.05 & 80.28 & 60.34 & 67.19 & 72.48 & 68.96 & 67.52 \\
\midrule
\multirow{3}{*}{\textbf{Dipper-1}} 
  & \textbf{AUC$\uparrow$} & 0.771 & 0.884 & 0.945 & 0.941 & 0.855 & 0.915 & 0.977 \\
  & \textbf{TPR$\uparrow$} & 0.142 & 0.372 & 0.748 & 0.576 & 0.274 & 0.438 & 0.852 \\
  & \textbf{PPL$\downarrow$} & 6.51 & 7.30 & 6.07 & 6.18 & 6.56 &  6.24  & 6.16 \\
\midrule
\multirow{3}{*}{\textbf{Dipper-2}} 
  & \textbf{AUC$\uparrow$} & 0.573 & 0.616 & 0.721 & 0.700 & 0.589 & 0.795 & 0.851 \\
  & \textbf{TPR$\uparrow$} & 0.026 & 0.038 & 0.146 & 0.074 & 0.030 & 0.138 & 0.254 \\
  & \textbf{PPL$\downarrow$} & 7.72 & 8.34 & 7.81 & 7.59 & 7.70 & 7.80 & 7.62 \\
\midrule
\multirow{3}{*}{\textbf{Translate}} 
  & \textbf{AUC$\uparrow$} & 0.831 & 0.928 & 0.955 & 0.968 & 0.912 & 0.948 & 0.973 \\
  & \textbf{TPR$\uparrow$} & 0.242 & 0.552 & 0.846 & 0.818 & 0.404 & 0.574 & 0.858 \\
  & \textbf{PPL$\downarrow$} & 9.16 & 11.37 & 8.33 & 8.73 & 9.29 & 8.95 & 8.81 \\
\midrule
\multicolumn{9}{c}{\cellcolor{blkC}\textbf{Llama2-7B, T=1.0}} \\
\midrule
\multirow{3}{*}{\textbf{Del-0.2}} 
  & \textbf{AUC$\uparrow$} & 0.853 & 0.975 & 0.993 & 0.979 & 0.914 & 0.835 & 0.928 \\
  & \textbf{TPR$\uparrow$} & 0.232 & 0.712 & 0.924 & 0.774 & 0.410 & 0.242 & 0.506 \\
  & \textbf{PPL$\downarrow$} & 23.71 & 27.13 & 21.66 & 22.55 & 23.54 & 22.86 & 22.49 \\
\midrule
\multirow{3}{*}{\textbf{Del-0.5}} 
  & \textbf{AUC$\uparrow$} & 0.581 & 0.711 & 0.790 & 0.777 & 0.803 & 0.617 & 0.783 \\
  & \textbf{TPR$\uparrow$} & 0.024 & 0.060 & 0.224 & 0.114 & 0.148 & 0.036 & 0.080 \\
  & \textbf{PPL$\downarrow$} & 92.81 & 105.16 & 87.05 & 88.56 & 95.12 & 89.98 & 88.95 \\
\midrule
\multirow{3}{*}{\textbf{Sub-0.2}} 
  & \textbf{AUC$\uparrow$} & 0.878 & 0.980 & 0.993 & 0.987 & 0.887 & 0.853 & 0.944 \\
  & \textbf{TPR$\uparrow$} & 0.300 & 0.756 & 0.950 & 0.862 & 0.406 & 0.228 & 0.578 \\
  & \textbf{PPL$\downarrow$} & 22.97 & 25.89 & 21.42 & 22.50 & 22.84 & 22.68 & 22.53 \\
\midrule
\multirow{3}{*}{\textbf{Sub-0.5}} 
  & \textbf{AUC$\uparrow$} & 0.638 & 0.772 & 0.862 & 0.843 & 0.759 & 0.759 & 0.809 \\
  & \textbf{TPR$\uparrow$} & 0.030 & 0.126 & 0.296 & 0.140 & 0.144 & 0.098 & 0.122 \\
  & \textbf{PPL$\downarrow$} & 48.50 & 52.78 & 44.80 & 46.69 & 47.12 & 46.78 & 46.51 \\
\midrule
\multirow{3}{*}{\textbf{Dipper-1}} 
  & \textbf{AUC$\uparrow$} & 0.757 & 0.862 & 0.950 & 0.943 & 0.814 & 0.857 & 0.961 \\
  & \textbf{TPR$\uparrow$} & 0.116 & 0.322 & 0.700 & 0.552 & 0.238 & 0.240 & 0.836 \\
  & \textbf{PPL$\downarrow$} & 8.33 & 8.91 & 7.90 & 8.11 & 8.19 & 8.13 & 8.09 \\
\midrule
\multirow{3}{*}{\textbf{Dipper-2}} 
  & \textbf{AUC$\uparrow$} & 0.551 & 0.614 & 0.711 & 0.710 & 0.590 & 0.687 & 0.859 \\
  & \textbf{TPR$\uparrow$} & 0.014 & 0.032 & 0.148 & 0.084 & 0.026 & 0.060 & 0.270 \\
  & \textbf{PPL$\downarrow$} & 9.34 & 9.58 & 8.64 & 8.97 & 9.25 & 9.07 & 9.02 \\
\midrule
\multirow{3}{*}{\textbf{Translate}} 
  & \textbf{AUC$\uparrow$} & 0.777 & 0.922 & 0.961 & 0.949 & 0.889 & 0.925 & 0.966 \\
  & \textbf{TPR$\uparrow$} & 0.146 & 0.448 & 0.796 & 0.612 & 0.324 & 0.436 & 0.834 \\
  & \textbf{PPL$\downarrow$} & 13.27 & 14.99 & 11.69 & 12.68 & 12.86 & 12.56 & 12.47 \\
\bottomrule
\end{tabular}}
\label{tab:robust_extend}
\end{table}

In Table~\ref{tab:robust_extend}, we report text perplexity under different attack strategies. The results show that token-deletion and token-substitution attacks substantially degrade generation quality, as indicated by their high perplexity. In contrast, paraphrasing attacks maintain low perplexity and preserve text quality, making them a more realistic and plausible method for watermark removal.

\newpage 

\section{Examples of Generated Texts} \label{app:examples}
In this section, we present sample text completions generated by the Qwen3-4B and Llama2-7B models. Prompts are drawn from the \textbf{realnewslike} split of the \textbf{C4} dataset and truncated to 200 tokens. For Qwen3-4B, we use a sampling temperature of 0.7 to generate 200-token outputs, and for Llama2-7B, we use a temperature of 1.0. Note that the resulting prompts differ slightly between the two models due to differences in their tokenizers. In the examples below, the text under the heading ``Base Model'' corresponds to a random completion produced by the original, unwatermarked model, while the text under ``Watermarked Model'' shows a random completion generated by the model trained with \marktune\ using the hyperparameters detailed in Appendix~\ref{app:imp_details}.

\subsection{Example 1}

\begin{tcolorbox}[
  enhanced,
  breakable,
  colback=gray!10,
  colframe=black,
  boxrule=0.8pt,
  left=3mm,right=3mm,top=2mm,bottom=2mm,
  before skip=5pt,
  after skip=5pt
]
\textbf{Input Prompt:}
Dubai: The Dubai Financial Market (DFM) witnessed another round of selling on Monday as traders remained cautious ahead of a long holidays. The DFM general index closed 0.44 per cent lower at 2,727.41, as it managed to stay above the support level of 2.706 that has proved critical after the gauged bounced back from these levels twice. Emaar Properties closed 1.2 per cent lower at Dh4.74. Emaar shares are struggling to find a bottom as the stock has been trading near to its previous low. A close over Dh4.85 shall be considered bullish, and ignite recovery to Dh4.95/5.20 in the short term, Shiv Prakash, senior analyst with First Abu Dhabi Bank Securities said in a note. Emaar's movements appear to be concerning at this time of year where investors are expected to be taking positions instead
\end{tcolorbox}

\begin{tcolorbox}[
  enhanced,
  breakable,
  colback=purple!5,
  colframe=black,
  boxrule=0.8pt,
  colbacktitle=purple!20,
  coltitle=black,
  fonttitle=\bfseries,
  title={Qwen3-4B Generations},
  left=3mm,right=3mm,top=2mm,bottom=2mm,
  before skip=5pt,
  after skip=5pt
]

\textbf{Base Model:} of holding shares ahead of the upcoming holidays, he added. Shares of Emaar had touched Dh4.45 earlier, before recovering to Dh4.50. DFM-listed companies in the real-estate and construction sector declined on the day, with Emaar, Dubai Land Department, Dubai World, Dubai Holding and Dubai Islamic Bank all trading lower. The DFM's general index has been trading below 2.706 since August 12. The stock index had breached this support level in July, as it rallied from its 13-month low of 2.616, which was recorded on July 12. The index has bounced twice after testing the support level. On the other hand, shares of Al Maktoum International Airport had surged 3.8 per cent to Dh1.084 after the airport operator said on Monday that it has entered into a Memorandum of Understanding (MoU) with the International Air
\begin{center}
\fbox{\textbf{z-score: -0.6454; p-value: 0.7406; perplexity: 5.0251}}
\end{center}

\noindent\hdashrule{\linewidth}{0.5pt}{2pt 2pt}

\textbf{Watermarked Model:} of selling off. We believe the stock has priced in the worst of the recent troubles and given the firm's strong fundamentals, we now expect a bounce towards Dh5.20, he said. We continue to expect Emaar's shares to rally towards Dh5.20 in the next couple of months as the firm has proved resilient in the past. The stock looks cheap given the discount to the firm's intrinsic value, and we now expect the shares to rally towards Dh5.20 in the next couple of months, he added. The real estate sector was the biggest loser of the day, with the sector index down by 1.2 per cent at 3,124.13. Almarai (ALM) declined 2.2 per cent to Dh5.85, while Alrosa (ARO) shed 1.9 per cent to Dh1.53 and Alshaya (ALY) dropped 1.
\begin{center}
\fbox{\textbf{z-score: 4.5114; p-value: 3.2187e-06; perplexity: 4.625}}
\end{center}
\end{tcolorbox}

\begin{tcolorbox}[
  enhanced,
  breakable,
  colback=gray!10,
  colframe=black,
  boxrule=0.8pt,
  left=3mm,right=3mm,top=2mm,bottom=2mm,
  before skip=5pt,
  after skip=5pt
]
\textbf{Input Prompt:}
Dubai: The Dubai Financial Market (DFM) witnessed another round of selling on Monday as traders remained cautious ahead of a long holidays. The DFM general index closed 0.44 per cent lower at 2,727.41, as it managed to stay above the support level of 2.706 that has proved critical after the gauged bounced back from these levels twice. Emaar Properties closed 1.2 per cent lower at Dh4.74. Emaar shares are struggling to find a bottom as the stock has been trading near to its previous low. A close over Dh4.85 shall be considered bullish, and ignite recovery to Dh4.95/5.20 in the short term, Shiv Prakash, senior analyst with First Abu Dhabi Bank Securities
\end{tcolorbox}

\begin{tcolorbox}[
  enhanced,
  breakable,
  colback=orange!5,
  colframe=black,
  boxrule=0.8pt,
  colbacktitle=orange!20,
  coltitle=black,
  fonttitle=\bfseries,
  title={Llama2-7B Generations},
  left=3mm,right=3mm,top=2mm,bottom=2mm,
  before skip=5pt,
  after skip=5pt
]

\textbf{Base Model:} (FASB Securities), said.The stock has traded in a narrow range for almost two years, during which time the stock did gain some momentum but failed to penetrate the key resistance point of Dh4.98/5. Since then, there has been no meaningful progress and the stock price has been in a free fall mode, he said.Emirates Telecommunication Group was the most active stock traded on the index with nearly 9.619 million shares, closely followed by Arabtec with over 8.219 million shares.On value terms, Emaar Properties contributed roughly Dh153.32 million to the market capitalisation, followed by Arabtec with Dh43.25 million and Emirates NBD with Dh20.69 million.In sectoral terms, the real estate sector led the top gainers with a surge
\begin{center}
\fbox{\textbf{z-score: 0.5016; p-value: 0.3079; perplexity: 5.2253}}
\end{center}

\noindent\hdashrule{\linewidth}{0.5pt}{2pt 2pt}

\textbf{Watermarked Model:} (FASB Securities), told the Gulf News.Dubai Islamic Bank, meanwhile, was another leading index heavyweight to gain ground on Monday. Shares of the bank picked up 0.77 per cent to close at Dh6.42.On the DFM and other markets in the region, we're looking at the 19th [August] as a down day for traders and investors, said Ahmed Saqr, assistant director, financial control department at Abu Dhabi Islamic Bank.After the Eid week, many traders and investors will have just a few days left in August to trade, he added.Investors will take a wait-and-watch approach, he said.Their sentiment towards the markets may turn negative at the beginning of September, because of the US September rates hike talks. This could negatively
\begin{center}
\fbox{\textbf{z-score: 4.3386; p-value: 7.1526e-06; perplexity: 5.4523}}
\end{center}
\end{tcolorbox}

\subsection{Example 2}

\begin{tcolorbox}[
  enhanced,
  breakable,
  colback=gray!10,
  colframe=black,
  boxrule=0.8pt,
  left=3mm,right=3mm,top=2mm,bottom=2mm,
  before skip=5pt,
  after skip=5pt
]
\textbf{Input Prompt:}
MADRID: As Cristiano Ronaldo enjoyed his time off at the French Riviera, his Portugal team-mates were busy handing Italy another disappointing result in the UEFA Nations League. Portugal did not need Ronaldo, who skipped international duty to recharge after a busy summer, to beat Italy 1-0 in Europe's newest soccer competition. Elsewhere, Turkey mounted a remarkable comeback from two goals down against Sweden, while Kosovo made history with their first competitive win since being admitted to UEFA in 2016.The European champions easily outplayed Italy, which had opened its Nations League campaign with a disappointing 1-1 home draw against Poland in its first competitive match under coach Roberto Mancini. Andre Silva scored on a counterattack early in the second half to give Portugal the Group 3 win in the top-tier League A. The hosts had several chances to extend their lead at the Stadium of Light in Lisbon, while Italy created little and relied mostly on set pieces. The Nations League gives
\end{tcolorbox}

\begin{tcolorbox}[
  enhanced,
  breakable,
  colback=purple!5,
  colframe=black,
  boxrule=0.8pt,
  colbacktitle=purple!20,
  coltitle=black,
  fonttitle=\bfseries,
  title={Qwen3-4B Generations},
  left=3mm,right=3mm,top=2mm,bottom=2mm,
  before skip=5pt,
  after skip=5pt
]

\textbf{Base Model:} a fresh impetus to the tournament that replaced the European Championship qualifiers. The top-tier league is divided into four pots, with each group consisting of four teams. The top two teams in each group will qualify for the 2020 European Championship in France. Turkey, which is not currently ranked in the top 20, came back from two goals down to beat Sweden 3-2 at the Group 3 venue in Istanbul. The Swedes took the lead in the 25th minute through a penalty, but the hosts had already equalised a minute later on a goal from a corner. Turkey took the lead through a goal from Ali Kocak, but the Swedes equalised through a Hakan Yakin penalty, before Kocak scored in the 70th minute to make the match up until then a 2-2 stalemate. The hosts took the lead through a goal from Cemre Onay in the 89th minute to win
\begin{center}
\fbox{\textbf{z-score: 0.9452; p-value: 0.1722; perplexity: 4.9626}}
\end{center}

\noindent\hdashrule{\linewidth}{0.5pt}{2pt 2pt}

\textbf{Watermarked Model:} the teams a chance to warm up for next summer's Euros, where they will face each other. But with Italy's defence vulnerable, the likes of Paulo Dybala and the injured Mario Balotelli were hardly needed. We should have won the game, but we didn't get the best of it, Mancini said after the defeat. This was a good way to start the Nations League, but we now have to find a way to get better because it's not enough to beat Portugal. Italy's next opponents in the group stage are the likes of Poland and Wales, but the 2020 European champions are already planning for the big games after the Nations League ends. We'll start preparing for the next phase of the year, because we know we're going to face some tough opponents in the next few weeks, Mancini said. We'll focus on those games and try to make the most of the opportunities we have. The Italians will also be looking to bounce back
\begin{center}
\fbox{\textbf{z-score: 4.3033; p-value: 8.4043e-06; perplexity: 5.4245}}
\end{center}

\end{tcolorbox}

\begin{tcolorbox}[
  enhanced,
  breakable,
  colback=gray!10,
  colframe=black,
  boxrule=0.8pt,
  left=3mm,right=3mm,top=2mm,bottom=2mm,
  before skip=5pt,
  after skip=5pt
]
\textbf{Input Prompt:}
MADRID: As Cristiano Ronaldo enjoyed his time off at the French Riviera, his Portugal team-mates were busy handing Italy another disappointing result in the UEFA Nations League. Portugal did not need Ronaldo, who skipped international duty to recharge after a busy summer, to beat Italy 1-0 in Europe's newest soccer competition. Elsewhere, Turkey mounted a remarkable comeback from two goals down against Sweden, while Kosovo made history with their first competitive win since being admitted to UEFA in 2016. The European champions easily outplayed Italy, which had opened its Nations League campaign with a disappointing 1-1 home draw against Poland in its first competitive match under coach Roberto Mancini. Andre Silva scored on a counterattack early in the second half to give Portugal the Group 3 win in the top-tier League A. The hosts
\end{tcolorbox}

\begin{tcolorbox}[
  enhanced,
  breakable,
  colback=orange!5,
  colframe=black,
  boxrule=0.8pt,
  colbacktitle=orange!20,
  coltitle=black,
  fonttitle=\bfseries,
  title={Llama2-7B Generations},
  left=3mm,right=3mm,top=2mm,bottom=2mm,
  before skip=5pt,
  after skip=5pt
]

\textbf{Base Model:} dominated for most of the game by playing Portugal more openly and allowing more space in midfield, but they struggled to take advantage. Portugal's previous victory over Italy was 1-0 in the Euro 2016 semifinals when it lifted the tournament's trophy. Italy played most of that game with 10 men after defender Giorgio Chiellini received a red card. Portugal was the only one of the 21 European members of the competition which had already played two games in Group 3 prior to the opener against Italy. The host nation, Spain and Poland had one game apiece and will continue the campaign this week. Italy lost its first two fixtures in the Nations League and missed out on qualification for the 2018 World Cup in Russia, while Portugal is one of Europe's major powers. It finished third in its group at the World Cup, losing
\begin{center}
\fbox{\textbf{z-score: -1.7278; p-value: 0.9579; perplexity: 5.8205}}
\end{center}

\noindent\hdashrule{\linewidth}{0.5pt}{2pt 2pt}

\textbf{Watermarked Model:} dominated for most of the game by patiently counterattacking when Italy tried to play with the ball. The Portuguese had their most threatening chances when Matteo Politano was sent off after 30 minutes for a rash challenge on Joao Mario outside the penalty area. Ricardo Horta shot wide from a good position. Italy dominated ball play in the midfield and defense after Portugal scored in the 53rd, but lacked the final attacking touch. Portugal, which lost its top spot in the FIFA rankings last week, remained two points above Sweden in Group 2 after both teams began their campaigns. The Portuguese host Sweden in September, while Italy travels to Poland. Sweden opened the match with two goals from former Premier League player Emil Forsberg, who scored in the first and 55th minutes. Sweden looked to have the title in its sights as they pressed for a
\begin{center}
\fbox{\textbf{z-score: 3.5713; p-value: 0.0002; perplexity: 6.2521}}
\end{center}

\end{tcolorbox}

\subsection{Example 3}

\begin{tcolorbox}[
  enhanced,
  breakable,
  colback=gray!10,
  colframe=black,
  boxrule=0.8pt,
  left=3mm,right=3mm,top=2mm,bottom=2mm,
  before skip=5pt,
  after skip=5pt
]
\textbf{Input Prompt:}
The New York Times on Thursday printed a special section featuring portraits of nearly every member of the record number of female lawmakers in Congress. A viral Twitter thread from the Times' design editor, Josh Crutchmer, includes a video of the covers the night before publication that has been viewed over 320,000 times as of Thursday morning. The photos, which were taken on Capitol Hill over five days by Times photographers Elizabeth D. Herman and Celeste Sloman, are meant as a testament to what power looks like in 2019, according to the paper. Like the work of Kehinde Wiley, who painted Barack Obama Barack Hussein ObamaChina, Russia, Iran rise in Latin America as US retreats Castro wants to follow Obama's lead on balancing presidency with fatherhood Trump's regulatory rollback boosts odds of a financial crisis MORE's official presidential portrait, these photographs evoke the imagery we are used to seeing in the halls of power, but place people not previously seen
\end{tcolorbox}

\begin{tcolorbox}[
  enhanced,
  breakable,
  colback=purple!5,
  colframe=black,
  boxrule=0.8pt,
  colbacktitle=purple!20,
  coltitle=black,
  fonttitle=\bfseries,
  title={Qwen3-4B Generations},
  left=3mm,right=3mm,top=2mm,bottom=2mm,
  before skip=5pt,
  after skip=5pt
]

\textbf{Base Model:} occupying them.\" The photographs are intended to show the power of women in our government, according to a Times source. The Times' print section, which ran the portraits on Thursday, also printed an article titled The 2020 Women's History Month Celebration. The article contains a list of female lawmakers and the causes they represent. The Times also published a companion piece featuring portraits of 12 women in Congress who have taken on more than just their usual roles. The article also included a video of the portraits, which had been viewed over 350,000 times as of Thursday morning, and a list of the women whose portraits appeared in the print section. The women profiled in the article include Rep. Jackie Speier, a California Democrat who has been battling cancer since 2015 and who, as of 2019, is one of only 13 women in Congress who have survived cancer. The article also features Rep. Barbara
\begin{center}
\fbox{\textbf{z-score: 0.6038; p-value: 0.2729; perplexity: 5.9870}}
\end{center}

\noindent\hdashrule{\linewidth}{0.5pt}{2pt 2pt}

\textbf{Watermarked Model:} in that position. It's the kind of thing the paper has been doing for decades. In 2016, it published a series of its own power portraits featuring the leaders of every major political party during the presidential primaries. This year, the paper has been highlighting the stories of the women running for president and the women in the other major races. But this year's section is particularly special because it's the first time the paper has managed to get the coverage it deserves for its work. The section is the culmination of a massive effort by the paper's editorial and design teams to make the most of the historic moment. There are so many great stories in this election cycle, but we wanted to make sure we told the ones that matter most, said the paper's executive editor, Dean Baquet. The section is just the beginning of what the paper hopes will be a long-term commitment to telling the stories of women in politics. The paper has already announced plans to launch a new
\begin{center}
\fbox{\textbf{z-score: 3.5408; p-value: 0.0002; perplexity: 5.0692}}
\end{center}

\end{tcolorbox}

\begin{tcolorbox}[
  enhanced,
  breakable,
  colback=gray!10,
  colframe=black,
  boxrule=0.8pt,
  left=3mm,right=3mm,top=2mm,bottom=2mm,
  before skip=5pt,
  after skip=5pt
]
\textbf{Input Prompt:}
The New York Times on Thursday printed a special section featuring portraits of nearly every member of the record number of female lawmakers in Congress. A viral Twitter thread from the Times' design editor, Josh Crutchmer, includes a video of the covers the night before publication that has been viewed over 320,000 times as of Thursday morning. The photos, which were taken on Capitol Hill over five days by Times photographers Elizabeth D. Herman and Celeste Sloman, are meant as a testament to what power looks like in 2019, according to the paper. Like the work of Kehinde Wiley, who painted Barack Obama Barack Hussein ObamaChina, Russia, Iran rise in Latin America as US retreats Castro wants to follow Obama's lead on balancing presidency with fatherhood Trump's regulatory rollback boosts odds
\end{tcolorbox}

\begin{tcolorbox}[
  enhanced,
  breakable,
  colback=orange!5,
  colframe=black,
  boxrule=0.8pt,
  colbacktitle=orange!20,
  coltitle=black,
  fonttitle=\bfseries,
  title={Llama2-7B Generations},
  left=3mm,right=3mm,top=2mm,bottom=2mm,
  before skip=5pt,
  after skip=5pt
]

\textbf{Base Model:} of successful Obamacare repeal MORE's official presidential portrait, our intent is to show just how diverse the new Congress is from all perspectives, Crutchmer tweeted. According to The Washington Post, the photos also include Rep. Alexandria Ocasio-Cortez, as well as the first Muslim woman elected to Congress. Rep. Ayanna Pressley, who won her election in Massachusetts and is the seat Ocasio-Cortez will take, is featured as the main photo on the cover, which reads Unity in Action. Rep. Ilhan Omar on Monday became the first Somali-American and one of the first Muslim women to serve in Congress. Lawmakers said upon arriving in Washington that they would be a force multiplier, as Omar said. In a statement to The Hill when photos first came out, Omar said the group is going to be
\begin{center}
\fbox{\textbf{z-score: 0.3792; p-value: 0.3522; perplexity: 6.2308}}
\end{center}
\noindent\hdashrule{\linewidth}{0.5pt}{2pt 2pt}

\textbf{Watermarked Model:} of nuclear incident MORE for the first time, this project honors women on the national stage, portraying them as they are, Times executive editor Dean Baquet said in a statement. One of the lawmakers, Rep. Deb Haaland Deb HaalandNational Archives to release Trump's passport application, other records on Friday: report Judy Chu is the next chairwoman of the Congressional Asian Pacific American Caucus Nadler's top priority is fighting 'epidemic of corruption in our political system' MORE (D-N.M.) who is the first Native American woman elected to Congress, said the photoshoot reflected the way women have been portrayed for centuries, both in media and in the public realm\" Baquet was in charge of the paper's decision to use the photo of Rep. Alexandria Ocasio-Cortez (D-N.Y.) in a sk
\begin{center}
\fbox{\textbf{z-score: 3.6125; p-value: 0.0001; perplexity: 6.2042}}
\end{center}
\end{tcolorbox}

\end{document}